\newtheorem{theorem}{\indent Theorem}
\newcommand{\removelatexerror}{\let\@latex@error\@gobble}
\begin{document}
%
\title{Multi-Agent Collaborative Inference via DNN Decoupling: Intermediate Feature Compression and Edge Learning}

%
%
%
%

\author{Zhiwei~Hao,
        Guanyu~Xu,
        Yong~Luo,
        Han~Hu,
        Jianping~An,
        and~Shiwen~Mao
\IEEEcompsocitemizethanks{
  \IEEEcompsocthanksitem Zhiwei Hao, Guanyu Xu, Han Hu, and Jianping An are with the School of Information and Electronics, Beijing Institute of Technology, Beijing 100081, China. 
  E-mail: \{haozhw, xuguanyu, hhu, an\}@bit.edu.cn. 
  \IEEEcompsocthanksitem Yong Luo is with School of Computer Science, Wuhan University, Wuhan 430072, China. 
  E-mail: luoyong@whu.edu.cn.
  \IEEEcompsocthanksitem Shiwen Mao is with Department of Electrical and Computer Engineering, Auburn University, Auburn, AL 36849, USA.
  E-mail: smao@auburn.edu.}
}

\IEEEtitleabstractindextext{%
\begin{abstract}
  Recently, deploying deep neural network (DNN) models via collaborative inference, which splits a pre-trained model into two parts and executes them on user equipment (UE) and edge server respectively, becomes attractive.
  However, the large intermediate feature of DNN impedes flexible decoupling, and existing approaches either focus on the single UE scenario or simply define tasks considering the required CPU cycles, but ignore the indivisibility of a single DNN layer.
  In this paper, we study the multi-agent collaborative inference scenario, where a single edge server coordinates the inference of multiple UEs.
  Our goal is to achieve fast and energy-efficient inference for all UEs.
  To achieve this goal, we design a lightweight autoencoder-based method to compress the large intermediate feature at first.
  Then we define tasks according to the inference overhead of DNNs and formulate the problem as a Markov decision process (MDP).
  Finally, we propose a multi-agent hybrid proximal policy optimization (MAHPPO) algorithm to solve the optimization problem with a hybrid action space.
  We conduct extensive experiments with different types of networks, and the results show that our method can reduce up to 56\% of inference latency and save up to 72\% of energy consumption.
\end{abstract}

\begin{IEEEkeywords}
Deep Reinforcement Learning, Mobile Edge Computing, Multi-user, Collaborative Inference, Hybrid Action Space
\end{IEEEkeywords}}

\maketitle

\IEEEdisplaynontitleabstractindextext

%
\IEEEpeerreviewmaketitle

\IEEEraisesectionheading{\section{Introduction}\label{sec:introduction}}


\IEEEPARstart{R}{ecent} years have witnessed a rapid development of intelligent internet of things (IoT) for deep learning (DL)-based applications, e.g., intelligent personal assistant and healthcare applications.
However, the inference of most DL models has an enormous amount of overhead, and this often results in an unacceptable latency or energy consumption, especially on resource-limited IoT UEs.
For example, the ResNet50 model \cite{DBLP:conf/cvpr/HeZRS16} requires 103MB memory and 4 GFLOPs for inference, while a Raspberry Pi-3B device only has 1GB RAM and 3.62 GFLOPS \cite{flops}.
To achieve effective inference, models are usually deployed in a mobile-edge computing (MEC) manner \cite{MEC}.
Specifically, a pre-trained model is decoupled at a proper point, and each pair is deployed on an UE and an edge server respectively.
When receiving an inference task, the front part of the model is first executed on the UE.
Then the obtained intermediate feature is offloaded to the edge server, which will complete the remaining inference task and finally return the result to the UE.
This procedure is called \textit{collaborative inference}.
Compared with cloud computing approaches, collaborative inference avoids direct uploading the input data to the cloud, which is the performance bottleneck of the current DL-based techniques \cite{DBLP:conf/asplos/KangHGRMMT17}.

In collaborative inference, the size of intermediate features should be smaller than the original input.
Otherwise, it's better to offload the original input data to the edge server.
However, the size of intermediate features of most DNN models are usually larger than the original input.
Furthermore, in common MEC scenarios, there usually exists multiple UEs, and their interference on the offloading channel would incur additional latency.
Hence, achieving effective multi-agent collaborative inference is challenging.

To address this problem, some existing works propose to compress the intermediate features of DNNs to reduce the transmission overhead \cite{DBLP:conf/islped/EshratifarEP19, DBLP:conf/icc/ShaoZ20}, where DNN together with quantization or entropy coding are utilized to build the compressor.
However, the architectures of such compressors are usually complex, which lead to a high inference latency under high uplink rate conditions.

In the real world, it is common for a single edge server to serve multiple UEs, where the UEs communicate with the server via a shared channel.
The interference between UEs in the offloading channel could incur significant additional latency in this scenario, even with compressed features.
To tackle this issue, some existing methods optimize the offloading decision and channel resource allocation to improve edge task offloading.
The resulting optimization problem is NP-hard \cite{DBLP:journals/ton/ChenJLF16}, and hence deep reinforcement learning (DRL) has been adopted to find the solution.
For example, Li \textit{et al.} \cite{DBLP:conf/wcnc/Li0LL18} use deep Q-learning (DQL) to allocate offloading decision and resources for multiple UEs, and Nath \textit{et al.} \cite{DBLP:conf/icc/NathLWF20} address this problem by employing deep deterministic policy gradient (DDPG).
However, these approaches simply define tasks using the size of input data and the required number of CPU cycles, while the indivisibility of a single DNN layer is ignored.
Furthermore, the multi-agent collaborative inference problem contains a hybrid action space, e.g., the discrete offloading channel and the continuous transmit power, while the existing methods only focus on either a discrete or continuous action space.

To remedy these drawbacks, we propose a DRL framework for multi-agent collaborative inference.
The framework contains a lightweight autoencoder-based intermediate feature compressor and a multi-agent hybrid proximal policy optimization algorithm, named MAHPPO, to train the DRL agent.
At first, to achieve fast and effective intermediate feature compression, we design a lightweight autoencoder-based method, which utilizes autoencoder and quantization to build the compressor.
Then for the multi-agent scenario, we redefine and formulate the problem by taking layer indivisibility into consideration.
Finally, we design an MAHPPO algorithm to solve the problem with a hybrid action space.
In the experiments, we first train the autoencoder-based compressors at each potential partitioning point of the ResNet18 model \cite{DBLP:conf/cvpr/HeZRS16}.
Then we measure the inference overhead of each module on an edge hardware.
Given the collected data, we train a DRL agent via MAHPPO to provide offloading decisions.
The results show that the autoencoder architecture can achieve a high compression rate with little overhead, and the MAHPPO algorithm can reduce up to 56\% of the inference latency and save up to 72\% of energy consumption compared with the full local inference strategy.
By simply tuning a balancing hyperparameter, we can achieve a trade-off between inference latency and energy consumption.
Moreover, we conduct experiments on VGG11 \cite{DBLP:journals/corr/SimonyanZ14a} and MobileNetV2 \cite{DBLP:conf/cvpr/SandlerHZZC18} to further verify the effectiveness of our approach.
Our main contributions are summarized as follows:
\begin{itemize}
  \item We design a lightweight autoencoder-based intermediate feature compression module, which can greatly reduce the transmission overhead while consuming little time and energy.
  \item We measure the latency and energy consumption of the DNN inference task on an edge hardware, and use these measurements on real devices to build model. This is different from the existing works that define tasks using only the input data size and the required amount of CPU cycles, which may not be appropriate for describing real-world applications.
  \item We redefine and formulate the problem of multi-agent collaborative inference as an MDP to enable feasible optimization, where the interference among multiple UEs is taken into consideration.
  \item We propose an MAHPPO algorithm to solve the optimization problem, which can deal with multiple agents and work with the hybrid action space.
\end{itemize}

The remainder of this paper is organized as follows.
We present the autoencoder-based intermediate feature compressor in Sec. \ref{sec:ae}.
The overall system and general formulation is presented in Sec. \ref{sec:model}, which is then reformulated as an MDP to facilitate optimization in Sec. \ref{sec:mdp}.
The proposed MAHPPO algorithm is depicted in Sec. \ref{sec:mahppo}, and Sec. \ref{sec:exp} includes some experiments and analysis.
Finally, we discuss the related works in Sec. \ref{sec:related_works} and conclude our paper in Sec. \ref{sec:conclusion}.

\section[Autoencoder-based Intermediate Feature Compression]{Autoencoder-based Intermediate\\ Feature Compression}
\label{sec:ae}

In this section, we propose a lightweight autoencoder-based intermediate feature compression method.
We first present an overview of the proposed method and then provide the details of each module.
Finally, we present the optimization strategy for model training.

\subsection{Overview}

Existing compressors usually require a large overhead and incur a high latency. To achieve efficient intermediate feature compression, we propose a lightweight autoencoder-based compressing method.
In particular, we adopt a single-layer autoencoder coupled with a quantization module as the compressor.
Autoencoder is an unsupervised learner composed of an encoder and a decoder, and the two parts constitute our compressor and decompressor respectively.
The quantization module represents the feature map using fewer bits to further compress the encoder outputs.

Fig. \ref{fig:compression_overview} illustrates the workflow of the collaborative inference equipped with the feature compression method.
The DNN model is split into two parts, which are deployed on the UE and the edge server, respectively.
When a new inference task arrives, the UE first executes the front part of the model and performs compression of the obtained intermediate feature, which is then transmitted to the edge server via a wireless channel.
On the edge server, the received compressed feature is restored and passed into the remaining part of the original DNN model.
When the inference is completed, the edge server returns the results to the UE.

\subsection{Channel Reduction and Restoration}

\begin{figure}
  \centering
  \includegraphics[width=0.475\textwidth]{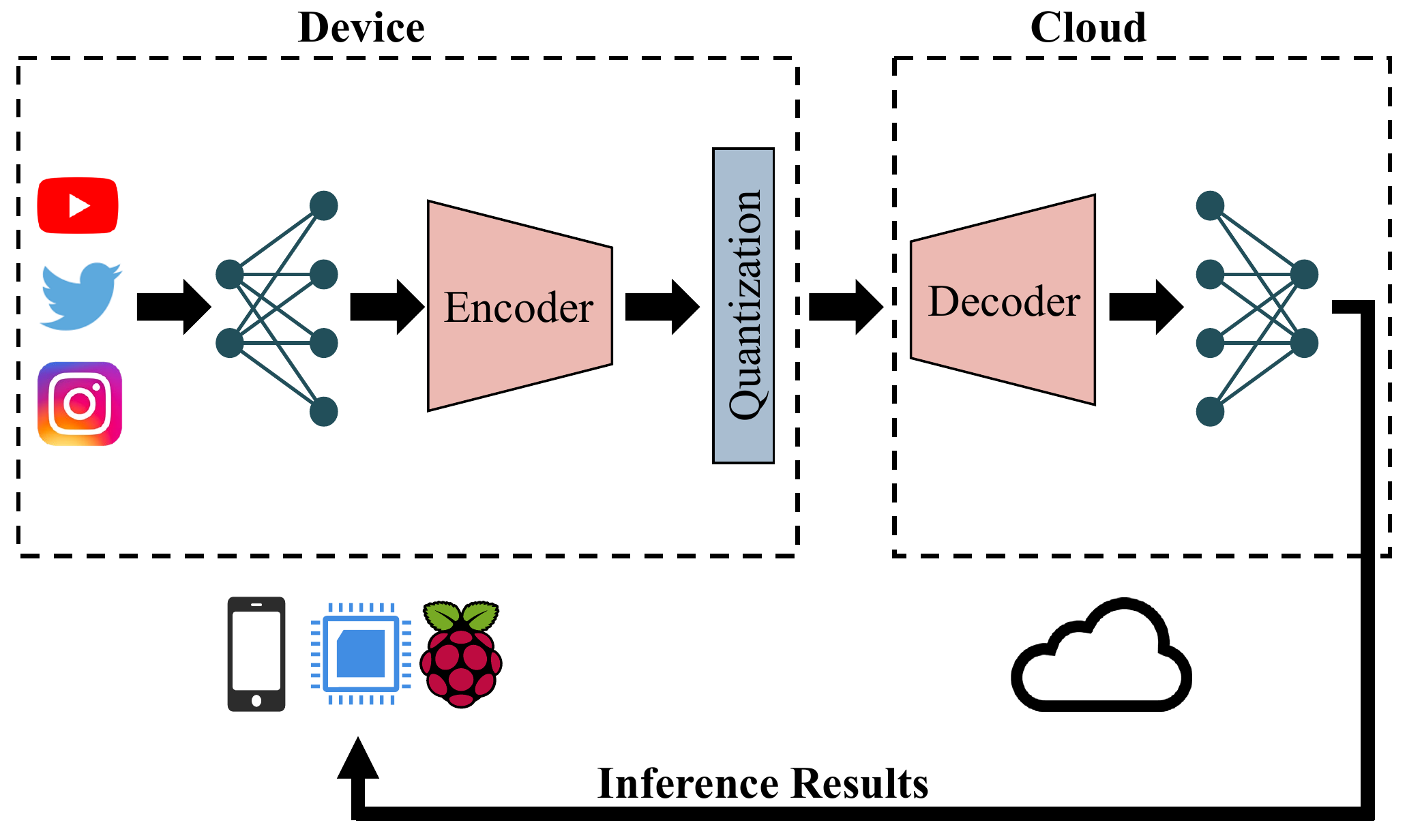}
  \caption{Architecture of the lightweight autoencoder-based intermediate feature compression framework. The DNN model is partitioned into two parts, which are deployed on the UE and the edge server respectively. During inference, the input data go through the front part of the model, the compressor, the wireless channel, the decompressor, and the remaining part of the model successively.}
  \label{fig:compression_overview}
\end{figure}

Intermediate features of DNN usually contain large information redundancy.
This motivates us to design a compressor to learn a compact feature representation.

In order to achieve effective feature compression with limited overhead, both the encoder and decoder are composed of only a convolution layer with a $1\times 1$ kernel.
Suppose the shape of the intermediate feature is $(bs, ch, w, h)$, where each dimension denotes batch size, channel number, width, and height of the feature respectively. 
The convolution layer with kernel size $(ch,ch',1,1)$ shrinks the channel number from $ch$ to $ch'$, where $ch<ch'$.
The compression rate of the channel reduction encoder is defined as $R_c=ch/ch'$, and the corresponding convolution kernel shape in the decoder is $(ch',ch,1,1)$.

When the model serves UEs, there may be inputs beyond the original training dataset, which usually causes a performance degradation.
Nevertheless, the lightweight autoencoder is focused on feature compression at the channel level, and is insensitive to small input domain shifts.
Hence, the proposed autoencoder is an effective and robust feature compressor.

\subsection{Quantization and Dequantization}

Existing works have shown that representing intermediate features using lower bit-width has little impact on inference accuracy \cite{DBLP:conf/icpads/LiHJWWZ18}. 
This motivates us to further compress the output features of the encoder by adopting the quantization technique, which maps floating-point values in feature maps to a smaller set consisting of discrete integers.

On the UE, the quantization procedure can be formulated as:
\begin{equation}
  \mathbf y_i = \text{round}\left(\frac{(2^{c_q}-1)[\mathbf x_i - \min(\mathbf x)]}{\max(\mathbf x)-\min(\mathbf x)}\right),
\end{equation}
where $\mathbf x$ is the intermediate feature to be quantized, $\mathbf x_i$ is the $i$-th value in $\mathbf x$, $y_i$ is output integer of $\mathbf x_i$, and $c_q$ is the bit-width used for quantization.
The maximum and minimum value of $\mathbf x_i$ can be replaced by the result computed on a pre-collected set of feature maps.

On the edge server, the quantized value $y_i$ can be recovered approximately by the dequantization procedure:
\begin{equation}
  \mathbf x'_i = \frac{\mathbf y_i [\max(\mathbf x)-\min(\mathbf x)]}{2^{c_q}-1}+\min(\mathbf x),
\end{equation}
where $\mathbf x'_i$ is the recovered value. 
The rounding operation in the quantization causes round-off error, so $\mathbf x'_i$ is usually not precisely equal to $\mathbf x_i$.
Since the original intermediate features are represented by 32 bit-width floating-point number, the compression rate of the quantization procedure is $R_q=32/c_q$. 
Hence, the overall compression rate of our method is given by:
\begin{equation}
  R=R_c\times R_q=\frac{ch\times 32}{ch'\times c_q}.
\end{equation}

\subsection{Optimization}

The feature compression component is optimized using a two-stage training strategy.
For a given pre-trained model $\mathcal M$ and a selected partitioning point, we first train the autoencoder by minimizing the $l_2$ distance between the original and recovered features.
Moreover, a cross-entropy loss is introduced to minimize the prediction error.
For a sample $x$ in the training set $\mathcal X$, we denote the original feature and the recovered feature of the autoencoder as $\mathbf T_i^x$ and $\mathbf T_o^x$, respectively.
Then the loss function for training the autoencoder can be formulated as:
\begin{equation}
  \mathcal{L}_{ae} = ||\mathbf T_i^x-\mathbf T_o^x||_2+\xi d_{ce}(\mathcal M(x),y),
\end{equation}
where $\xi$ is a balancing hyperparameter, $d_{ce}$ is the cross entropy measurement, and $y$ is the corresponding label of sample $x$.

In the training of the autoencoder, we freeze parameters in the pre-trained model.
When this first-stage training is done, we fine-tune all the parameters in the model on the training set using a small learning rate.

\textbf{Discussion.} The proposed autoencoder-based compressor is specially designed for the MEC scenario, which reduces the size of intermediate features while consuming little time during inference.
The autoencoder training also takes little time and energy due to its lightweight nature.
In practice, we can further improve the compression rate by designing a more complex autoencoder structure or adopting other approaches such as entropy coding.
However, the compressor is designed to reduce the latency caused by data offloading. 
The extra latency introduced by the compressor should be less than the reduced offloading latency.
By adopting the proposed compressor, we can significantly reduce the inference latency and save plenty of energy with little training and inferring overhead on the autoencoder.

\section{Multi-user Collaborative Inference}
\label{sec:model}

\begin{figure}[!t]
  \centering
  \subfloat[]{\includegraphics[width=0.475\textwidth]{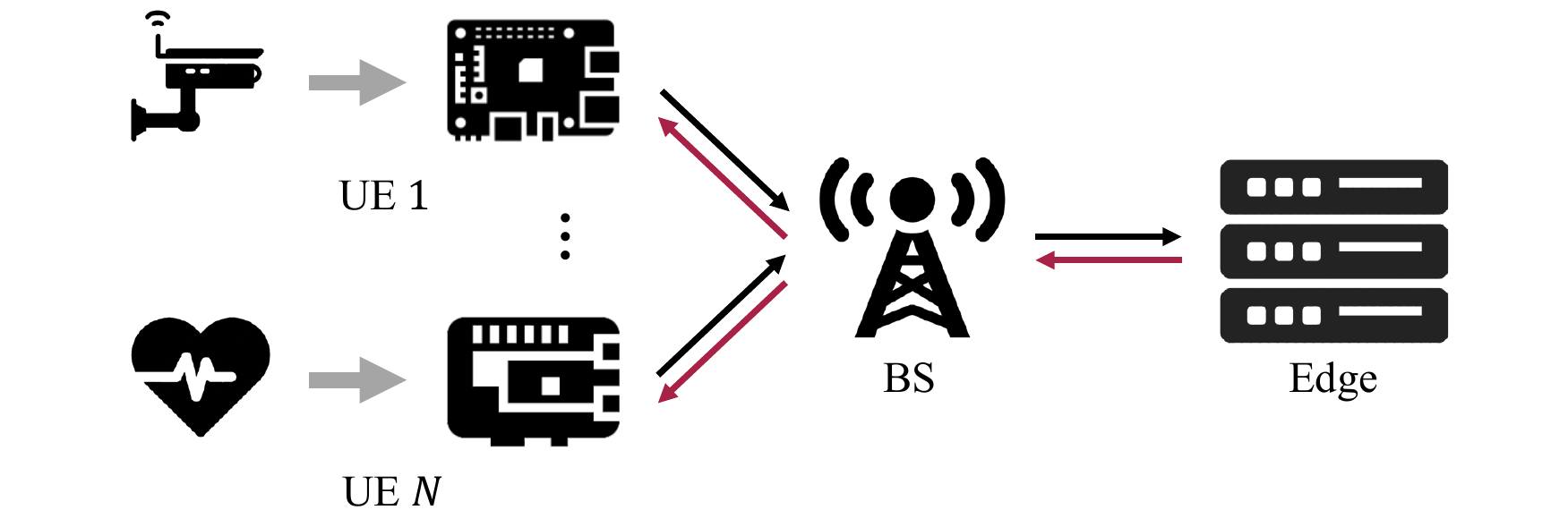}
  \label{fig:workflow_a}}
  \vfil
  \subfloat[]{\includegraphics[width=0.475\textwidth]{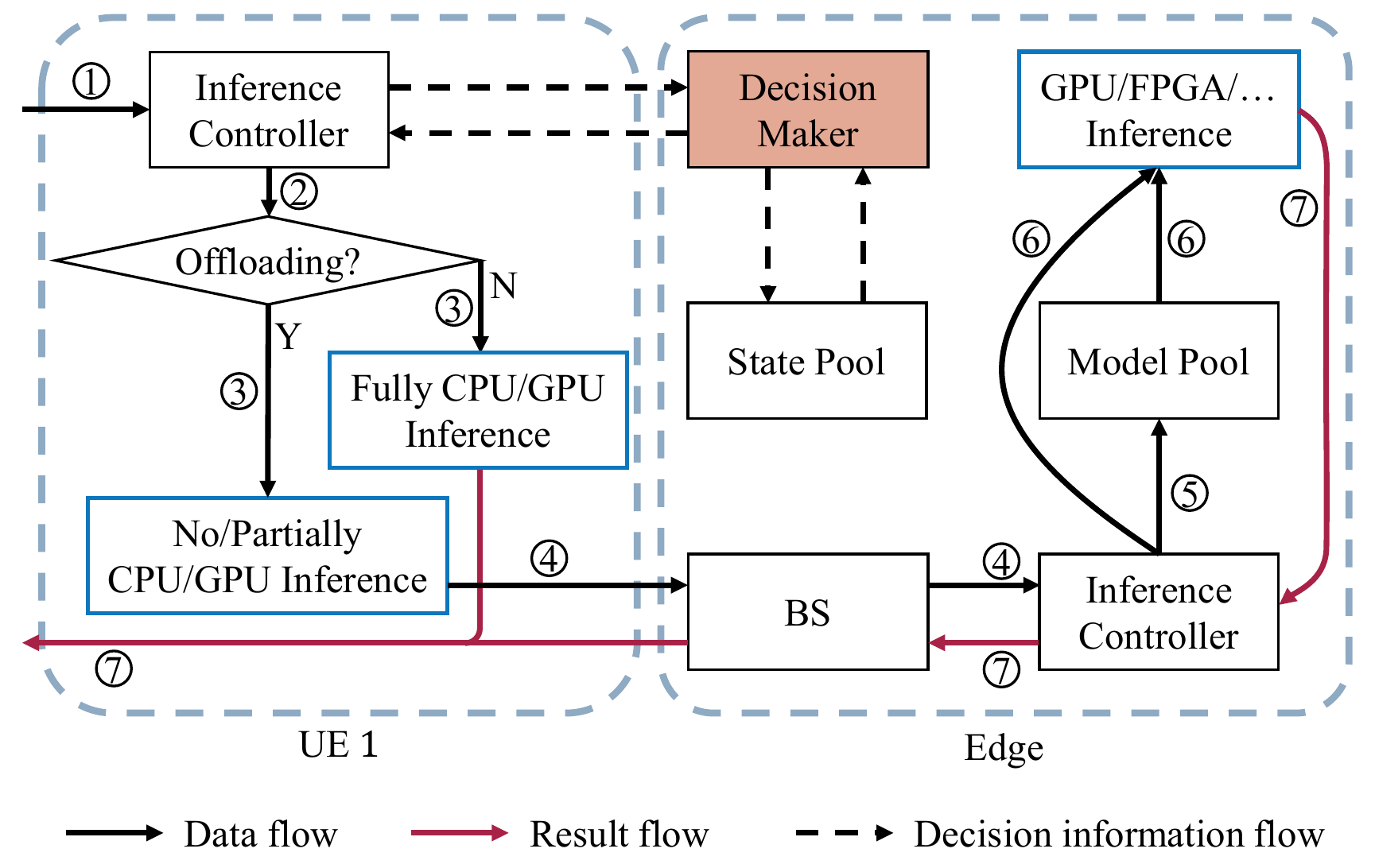}
  \label{fig:workflow_b}}
  \caption{Workflow of the multi-agent collaborative inference scenario: (a) the overall workflow; (b) workflow of a specific UE, where the UE can perform local inference, offload the original input to the edge for inference, or split the DNN into two parts for collaborative inference.}
  \label{fig:workflow}
\end{figure}

In practice, one edge server usually needs to provide service for more than one UE, where the optimal offloading policy is more difficult to obtain than the single UE scenario, because of the interference within wireless channels.
In this section, we consider a scenario where multiple UEs conduct DNN inference with the help of only one edge server.
We will first present the workflow of the multi-agent collaborative inference, and then the system model and problem formulation.

\subsection{Workflow}

The overall workflow of the multi-agent collaborative inference system is provided in Fig. \ref{fig:workflow}(a), where the system contains multiple UEs, a wireless base station (BS), and an edge server.
UEs communicate with the BS via a wireless channel, and the BS is linked with the edge server via an optical fiber network.
Each UE needs to complete several DNN inference tasks.
For each task, the edge server makes a decision about where the task is to be processed, \textit{i.e.}, on the UE holding the task or on the edge server.
Our goal is to find the decision that helps complete all tasks quickly while consuming as little energy as possible.

Fig. \ref{fig:workflow}(b) illustrates the detailed workflow by taking ``UE $1$'' as an example.
We consider a fixed-frequency decision update for the collaborative inference problem and divide time into nonoverlapping frames.
At the beginning of each time frame, a decision-maker deployed at the edge server sends decisions about where tasks should be processed during the current frame to each UE.
The decision is obtained based on the state of each UE, such as the number of remaining tasks, the size of data to be offloaded, and so on.
We provide a detailed definition of the UE state in Sec. \ref{sec:mdp_sub}.
At the end of the previous frame, each UE sends their state information to the edge server, and the edge server collects and stores the states of all UEs.
We term the collection of all UE states the state pool.
In a specific time frame, when the UE starts a new task, it follows the received decision on whether to offload the task to the edge or not.
If the decision is to perform inference locally, the UE will execute the complete neural network with its on-chip CPU/GPU.
Otherwise, the UE will partially (or not) run the neural network locally and then transmit the compressed intermediate features (or the original input) to the BS via a wireless channel.
The BS will then transmit such data to the edge server through an optical fiber.
At the edge end, the server will identify the right model according to the received data at first and then complete the inference task using its more powerful hardware.
Finally, the edge server returns the result to the UE.

\subsection{DNN Inference Model}

Collaborative inference requires to decouple DNN into several parts.
Here we assume that the DNN can be divided by layers or residual blocks, if it is a deep residual model.

For a system consists of $N$ UEs, we denote the set of UEs as $\mathcal N$, $\mathcal N=\{1,...,N\}$.
The model deployed on UE $n$ can be divided into $B_n+1$ parts, which indicates that there are $B_n$ potential partitioning points for the model deployed on UE $n$.
The partitioning decision of this UE can be denoted as $b_n\in\mathcal B$, $\mathcal B=\{0,1,...,B_n+1\}$, which indicates that the UE will offload the intermediate result to the edge after completing the inference of the first $b_n$ parts.
If $b_n=0$, UE $n$ will direct offload the original input to the edge server for inference, and if $b_n=B_n+1$, the UE will conduct the inference locally.

\subsection{Communication Model}
\label{sec:computation_model}

During the offloading procedure, UEs first communicate with a wireless BS.
Then the data will be transmitted to the edge server via an optical fiber network.

Each user transmits data through a particular offloading channel with a transmit power.
We denote the channel and the power of UE $n$ as $c_n\in \{1,...,C\}$ and $p_n>0$, respectively.
Combined with the partitioning point $b_n$, the three terms construct the inference action of UE $n$, denoted as $(b_n,c_n,p_n)$.
The decision-maker provides inference action of all UEs, based on a policy $\pi$, which is the probability distribution of all possible inference actions.
Hence, the uplink data rate between UE $n$ and the edge server can be computed as \cite{rappaport1996wireless}:
\begin{equation}
  r_{n}(\pi)=\omega_{c_n} \log_2\left(1+\frac{p_n g_n}{\sigma_{c_n} +\sum\limits_{i\in \mathcal N\backslash\{n\},b_i\in\mathcal B\backslash\{B_i+1\}} p_i g_i}\right),
\end{equation}
where $\omega_{c_n}$ is the bandwidth of channel $c_n$, $g_n$ is the channel gain between UE $n$ and the wireless BS, and $\sigma_{c_n}$ is the background noise power of channel $c_n$.

\subsection{Computation Model}

We consider the inference latency and the energy consumption of each sample.
The overhead for collecting states of the UEs in each time frame is ignored, as the state information is usually negligible compared with the offloading data.
In the worst case, the extra latency at the server is only a small constant, which does not affect the optimization results.
Here we still take UE $n$ as an example, and suppose that the size of compressed intermediate feature is $f_n$ bits at partitioning point $b_n$.
When $n=0$, $f_n$ denotes the size of the original input sample.
We assume a powerful edge server, \textit{i.e.}, all received tasks can be finished with negligible latency, and omit the latency at the edge end.
Similar settings can be found in previous works \cite{DBLP:journals/ejwcn/ChenW20}.
Hence, the inference latency of a single sample is composed of three terms: the local inference latency $t_n^f$, the feature compression latency $t_n^c$, and the data transmission latency $t_n^t$.
The local inference latency $t_n^f$ and the feature compression latency $t_n^c$ can be collected on device.
Based on the communication model, the data transmission latency is:
\begin{equation}
  t_{n}^t=\frac{f_n}{r_{n}(\pi)}.
\end{equation}

Thus the overall latency can be computed as:
\begin{equation}
  \begin{aligned}
    t_{n}(\pi)=t_n^f I_{\{b_n\in\mathcal B\backslash\{0\}\}}&+t_n^c I_{\{b_n\in\mathcal B\backslash\{0,B_n+1\}\}}\\&+t_{n}^t I_{\{b_n\in\mathcal B\backslash\{B_n+1\}\}},
  \end{aligned}
\end{equation}
where $I_{\{C\}}$ is an indicator function which equals to 1 only when the condition $C$ is true and equals to 0 otherwise.

Similarly, we can derive the energy consumption as:
\begin{equation}
  \begin{aligned}
    e_{n}(\pi)=e_n^f I_{\{b_n\in\mathcal B\backslash\{0\}\}}&+e_n^c I_{\{b_n\in\mathcal B\backslash\{0,B_n+1\}\}}\\&+e_{n}^t I_{\{b_n\in\mathcal B\backslash\{B_n+1\}\}},
  \end{aligned}
\end{equation}
where $e_n^f$ and $e_n^c$ are the local inference energy consumption and the feature compression energy consumption, respectively, which can also be measured on device; $e_{n}^t$ is the data transmission energy consumption, which is computed as:
\begin{equation}
  e_{n}^t = p_n t_{n}^t.
\end{equation}

\subsection{Problem Formulation}

Based on the above modelings, we formulate the overall optimization problem.
By solving this problem, we can achieve effective multi-agent collaborative inference.

In our scenario, each UE receives several tasks at the beginning, and our goal is to find a policy $\pi$ that can minimize both the latency and energy consumption for completing all tasks.
Supposing that UE $n$ receives $K_n$ tasks, then our optimization problem can be formulated as:
\begin{equation}
  \begin{array}{r@{\quad}l@{}l}
    \text{(P1)}&\min\limits_{\pi}&[\max\limits_n\sum_{k=1}^{K_n}t_{n,k}(\pi)+\beta\sum_{n=1}^N\sum_{k=1}^{K_n}e_{n,k}(\pi)  \vspace{1ex}]\\
    \text{s.t.}&\text{(C1)}&~b_n\in\{0,1,...,B_n+1\},~\forall n\in\mathcal N\\
    &\text{(C2)}&~c_n\in\{1,...,C\},~\forall n\in\mathcal N\\
    &\text{(C3)}&~0< p_n\leq p_{max},~\forall n\in\mathcal N,
  \end{array}
\end{equation}
where $p_{max}$ is the maximum transmit power of a single UE and $\beta>0$ is a balancing hyperparameter.

\section{MDP Reformulation}
\label{sec:mdp}

In this section, we reformulate the problem as an MDP to facilitate the optimization procedure.
Table \ref{tab:notation} is a summary of the used notation.

\subsection{Problem Redefinition}

Problem (P1) is a mixed-integer nonlinear programming problem \cite{bonami2012algorithms}, which is NP-hard and difficult to solve for an optimal solution via traditional approaches \cite{DBLP:journals/corr/abs-2001-09223}.
DRL has emerged as a promising method to solve such problems, which requires to represent the problem as an MDP.
However, problem (P1) cannot be directly expressed in the MDP form, since it contains the sum of latency and the energy consumption of each task.
When state transition occurs, there may be half-completed tasks. 
This means the latency and the energy consumption of tasks in the last time interval may not be unavailable.
Thus, we should reformulate the problem for applying DRL algorithms.

To achieve this goal, we propose to consider the averaged latency and energy consumption of each completed task in each time frame and ignore the overhead of half-completed tasks.
We enforce the averaged latency of each completed task in each time frame to be small so that more tasks can be completed, and thus the overall latency will also be small.
We also require the averaged energy consumption to be small, so that the overall energy consumption can be minimized.
Supposing that we require $T(\pi)$ time frames to complete all tasks, and the expected number of completed tasks in one time frame and the energy consumption at time frame $t$ are $K(\pi)$ and $E_t(\pi)$, respectively, then the problem can be reformulated as:
\begin{equation}
  \begin{array}{r@{\quad}l}
    \text{(P2)}&\min\limits_{\pi}~\sum_{t=1}^{T(\pi)}\frac{T_0 + \beta E_t(\pi)}{K(\pi)}\\
    \text{s.t.}&\text{(C1),~(C2),~and~(C3)},
  \end{array}
\end{equation}
where $T_0$ denotes the duration of one time frame.

\begin{table}[!t]
  \caption{Notation}
  \centering
  \begin{tabular}{l|l}
    \hline
    \textbf{Symbol} & \textbf{Description}                                 \\
    \hline
    $\pi$           & policy for generating action $a_t$                   \\
    $T(\pi)$        & time horizon                                         \\
    $T_0$           & duration of one time frame                           \\
    $K(\pi)$        & expected number of completed tasks in one time frame \\
    $K_t(\pi)$      & number of completed tasks at time $t$                \\
    $E_t(\pi)$      & energy consumption at time $t$                       \\
    $\mathbf{k}_t$  & numbers of uncompleted tasks at time $t$             \\
    $\mathbf{l}_t$  & left time of local computation at time $t$           \\
    $\mathbf{n}_t$  & size of left data to be offloading at time $t$       \\
    $\mathbf{d}$    & distance from UEs to the edge server                 \\
    $\mathbf{b}_t$  & partitioning point of each UE at time $t$               \\
    $\mathbf{c}_t$  & offloading channel of each UE at time $t$            \\
    $\mathbf{p}_t$  & transmit power of each UE at time $t$            \\
    $R$             & cumulative reward in one episode                     \\
    $\phi$          & parameters of the critic network                     \\
    $\theta_n$      & parameters of the $n$-th actor network               \\
    \hline
  \end{tabular}
  \label{tab:notation}
\end{table}

\subsection{Theoretical Analysis}

In this section, we prove that the objective of problem (P1) is upper bounded by that of problem (P2), so that we can enhance multi-agent collaborative inference by optimizing problem (P2).
This is summarized in the following theorem.

\begin{theorem}
  A policy update that minimizes the objective function of problem (P2) can also minimize the objective function of problem (P1) for small $\beta$.
  \label{theorem}
\end{theorem}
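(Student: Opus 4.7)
The plan is to express both objectives in a common form built from three ingredients: $K_{\text{total}} = \sum_n K_n$ (the total episode workload), $E_{\text{total}}(\pi) = \sum_{n,k} e_{n,k}(\pi)$, and the throughput $K(\pi)$. The pivotal bookkeeping identity is the conservation law $T(\pi)\,K(\pi) = K_{\text{total}}$: the horizon times the expected completions per frame must equal the total number of tasks. Summing $E_t(\pi)$ over $t$ likewise yields $\sum_{t=1}^{T(\pi)} E_t(\pi) = E_{\text{total}}(\pi)$.

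First I would substitute these identities directly into (P2) to obtain
\begin{equation*}
\mathrm{(P2)} \;=\; \frac{T(\pi)\,T_0 + \beta E_{\text{total}}(\pi)}{K(\pi)} \;=\; \frac{K_{\text{total}}\,T_0}{K(\pi)^2} \;+\; \frac{\beta\,E_{\text{total}}(\pi)}{K(\pi)}.
\end{equation*}
Next I would bound the latency term of (P1) by the total wall-clock duration: the slowest UE cannot finish later than the system horizon expires, so $\max_n \sum_k t_{n,k}(\pi) \le T(\pi)\,T_0 = K_{\text{total}}T_0/K(\pi)$. Combined, these give $\mathrm{(P1)} \le K(\pi)\cdot\mathrm{(P2)}$, so (P2) acts as an upper-bounding surrogate for (P1) up to the throughput factor.

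Having both objectives in a common form, I would argue minimizer-alignment for small $\beta$. As $\beta \to 0^{+}$ the two objectives collapse to $K_{\text{total}}T_0/K(\pi)$ and $K_{\text{total}}T_0/K(\pi)^2$ respectively; both are strictly decreasing in $K(\pi)$, so any policy update that decreases (P2) must raise $K(\pi)$, and by monotonicity it decreases (P1) as well. A continuity argument in $\beta$ then extends this alignment to a neighbourhood of $\beta = 0$: since the energy correction $\beta E_{\text{total}}(\pi)/K(\pi)$ is uniformly small compared with the latency term on the compact feasible policy set, it cannot reverse the sign of the policy gradient for sufficiently small $\beta$, so gradient directions that minimize (P2) continue to minimize (P1).

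The main obstacle I expect is making the bound $\max_n \sum_k t_{n,k}(\pi) \le T(\pi)\,T_0$ tight in the presence of overlapping local computation and transmission and of tasks that straddle frame boundaries. My intended fix is a scheduling-level argument at frame granularity: in each frame a UE can advance its backlog by at most $T_0$ of effective work, so after $T(\pi)$ frames no UE's cumulative latency exceeds $T(\pi)T_0$, with at most one extra frame of slack from a half-completed task at the horizon. This constant slack is precisely what the problem redefinition in Sec. \ref{sec:mdp} absorbs by averaging over completed tasks. A secondary subtlety is controlling the $\pi$-dependence of $E_{\text{total}}$ when establishing the continuity-in-$\beta$ step, which I would handle by noting that both $E_{\text{total}}(\pi)$ and $K(\pi)$ are bounded on the feasible policy set, so the energy correction is uniformly $O(\beta)$.
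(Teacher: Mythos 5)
Your proposal is correct and follows essentially the same route as the paper's proof: both hinge on identifying the makespan $\max_n\sum_k t_{n,k}(\pi)$ with the horizon $T(\pi)T_0 = K_{\mathrm{total}}T_0/K(\pi)$, so that at $\beta=0$ both objectives are monotone decreasing in the throughput $K(\pi)$, and both then invoke boundedness of the energy term to extend the alignment to small $\beta>0$. The only cosmetic difference is that the paper phrases the monotonicity step as a two-case comparison of policies ($K(\pi_1)\ge K(\pi_2)$ versus $K(\pi_1)<K(\pi_2)$) and treats the makespan--horizon relation as an exact identity, whereas you use the substitution $T(\pi)=K_{\mathrm{total}}/K(\pi)$ and an inequality with bounded slack.
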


\begin{proof}
  We assume that the numbers of UEs and the task number of each UE in the two problems are equal.
  For convenience, we signify the objective function of problem (P1) and (p2) as $f_1(\pi)$ and $f_2(\pi)$, respectively. 
  We denote the first and the second terms in $f_1(\pi)$ as $A(\pi)$ and $B(\pi)$, i.e., $f_1(\pi)=A(\pi) + \beta B(\pi)$, and $f_2(\pi)$ can be rewritten as:
  \begin{equation}
    \notag
    \begin{aligned}
      f_2(\pi)&=\frac{\sum_{t=1}^{T(\pi)}\left(T_0 + \beta E_t(\pi)\right)}{K(\pi)}\\
      &=\frac{T(\pi)T_0 + \beta \sum_{t=1}^{T(\pi)}E_t(\pi)}{K(\pi)}\\
      &=\frac{A(\pi) + \beta B(\pi)}{K(\pi)}.
    \end{aligned}
  \end{equation}

  If there are two policies $\pi_1$ and $\pi_2$ that satisfy $f_2(\pi_1)>f_2(\pi_2)$, we aim to prove that $f_1(\pi_1)>f_1(\pi_2)$.
  We consider two cases: 1) $K(\pi_1)\geq K(\pi_2)$; 2) $K(\pi_1)< K(\pi_2)$.

  For case 1), it is obvious that $f_1(\pi_1)>f_1(\pi_2)$.

  For case 2), we can obtain that $T(\pi_1)>T(\pi_2)$, since $K(\pi)$ is the expected number of completed tasks in one time frame.
  Thus we have $A(\pi_1)>A(\pi_2)$, and for $f_1(\cdot)$, we have:
  \begin{equation}
    \notag
    \begin{aligned}
      f_1(\pi_1)-f_1(\pi_2)&=(A(\pi_1)-A(\pi_2))+\beta (B(\pi_1)-B(\pi_2)),
    \end{aligned}
  \end{equation}
  where $B(\pi_1)-B(\pi_2)$ denotes the extra energy used for completing all tasks using policy $\pi_2$.
  This value is upper bounded since the energy consumption of the local inference and the feature compression is constant, and the transmit power is upper bounded.
  Thus there must exist a small $\beta$ that ensures $f_1(\pi_1)>f_1(\pi_2)$.
  According to the analysis in the above two cases, we conclude that minimizing the objective function of problem (P2) can guarantee the decrease of the objective function of problem (P1) when $\beta$ is small.
\end{proof}

\textbf{Discussion.} Problem (P2) converts the sum of tasks into the sum of time frames, so that the problem is in accordance with the form of cumulative reward in DRL.
In practice, the improvement of $K(\pi)$ can usually decrease the overall energy consumption $B(\pi)$.
Thus generally a large $\beta$ can also guarantee the validity of Theorem \ref{theorem}.

\subsection{MDP Formulation}
\label{sec:mdp_sub}

\begin{figure*}
  \centering
  \includegraphics[width=\textwidth]{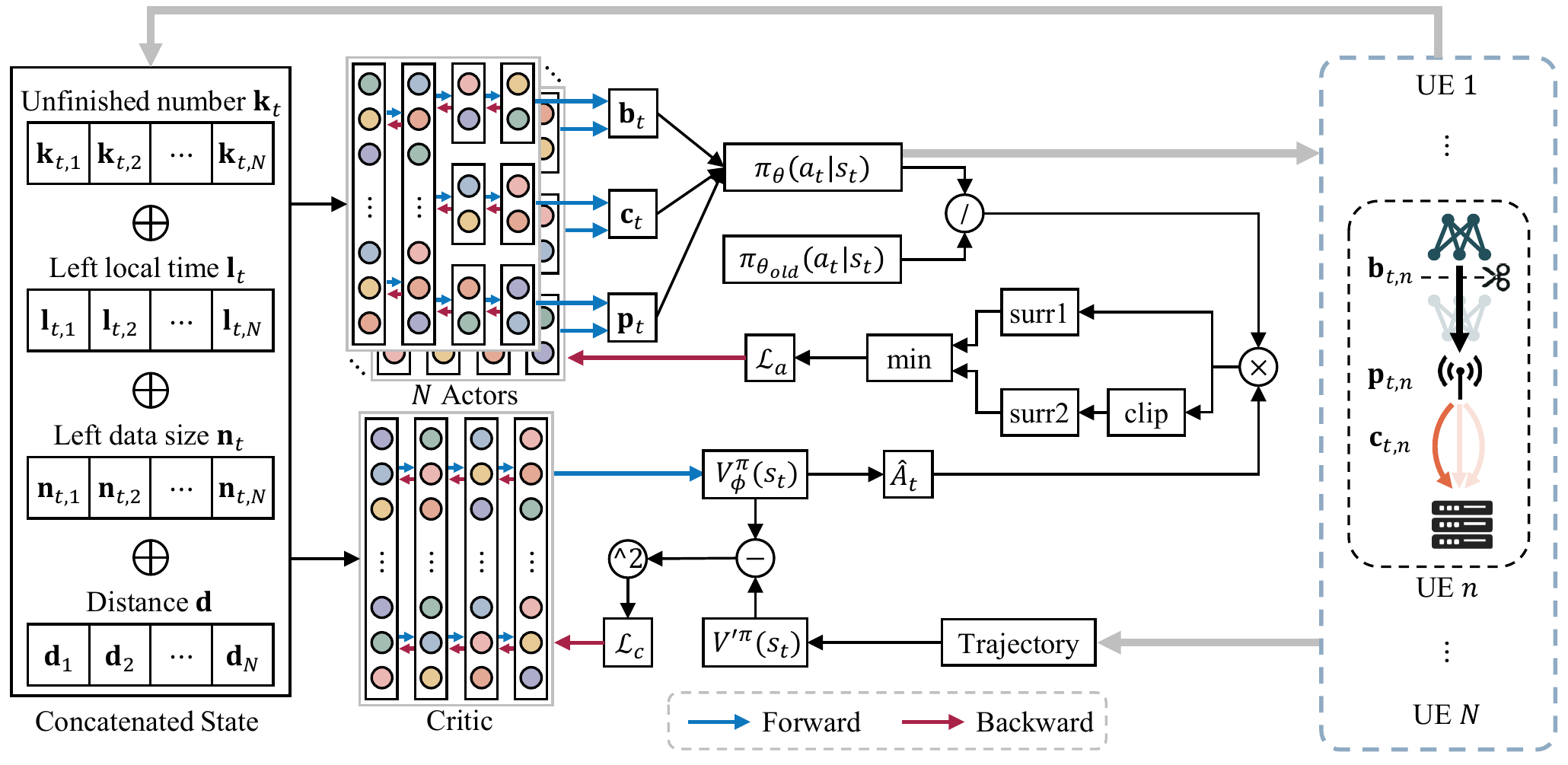}
  \caption{Overall structure of MAHPPO. 
  This algorithm contains multiple actor networks and a global critic network. 
  At the left part, all vectors in the state are concatenated and sent to the actors and the critic.
  Each actor outputs the partitioning point, offloading channel, and transmit power decisions for its corresponding UE, and the critic outputs an estimation of the state value.
  The middle part depicts the loss function, where ``surr'' is short for ``surrogate objective.''
  The right part illustrates how a trajectory for policy updating is obtained.}
  \label{fig:mahppo}
\end{figure*}

An MDP depicts the interaction between the DRL agent and the environment and can be represented by a tuple $(\mathcal S;\mathcal A;P;r)$, where $\mathcal S$ is a set of states, $\mathcal A$ is a set of actions, $P:\mathcal{S} \times \mathcal{A} \times \mathcal{S} \to \mathbf{R}$ is a probability distribution that depicts the system dynamics, and $r:\mathcal{S} \times \mathcal{A} \times \mathcal{S} \to \mathbf{R}$ is the reward.

We first define state $s_t$ of the MDP, where $t$ denotes the $t$-th decision time frame.
State $s_t$ consists of four parts: the left task number $\mathbf{k}_t$, the left local computation time $\mathbf{l}_t$, the left offloading data size $\mathbf{n}_t$, and the distance from UEs to the edge server $\mathbf{d}$.
That is, $s_t=\{\mathbf{k}_t,\mathbf{l}_t,\mathbf{n}_t,\mathbf{d}\}$, where $\mathbf{k}_t$ contains the left uncompleted task number of each UE at time $t$, which may guide the system to assign more resources to the UEs that have more uncompleted tasks;
$\mathbf{l}_t$ is the left time of completing the local inference and the feature compression of current half-completed tasks;
$\mathbf{n}_t$ is the size of left data to be offloaded of these tasks;
and the distance $\mathbf{d}$ is , which remains unchanged in one episode.

We then define action $a_t$ at time $t$, given by $a_t=\{\mathbf{b}_t,\mathbf{c}_t,\mathbf{p}_t\}$, where $\mathbf{b}_t$ is the partitioning point of each UE, $\mathbf{c}_t$ is the offloading channel of each UE, and $\mathbf{p}_t$ is the transmit power of each UE.
At time $t$, the transmit power $\mathbf{p}_t$ becomes effective immediately, while the other two elements in $a_t$ become effective when new tasks are started, i.e., after half-completed tasks in the previous time frame are completed.

Finally, we define reward $r_t$.
The expectation $K(\pi)$ in problem (P2) is hard to obtain.
Thus we use the number of completed tasks at the $t$-th time frame $K_t(\pi)$ as an estimate of $K(\pi)$. 
According to the problem definition in problem (P2), we define the reward as:
\begin{equation}
  r_t=-\frac{T_0}{K_t}-\beta\frac{E_t}{K_t},
\end{equation}
where we have omitted the dependence on $\pi$ to simplify the notation.
The cumulative reward is $R=\sum_{t=1}^{T(\pi)}r_t$, which is just the negative of the objective function of problem (P2).
Thus we can solve the problem by maximizing the cumulative reward.

\section{MAHPPO Algorithm}
\label{sec:mahppo}

In this section, we propose a MAHPPO algorithm to solve the multi-agent collaborative inference problem.
We first depict the network design of the algorithm and then present the optimization procedure.

\subsection{Actor-Critic Architecture Design}
\label{sec:ac_net}

The proposed MAHPPO method has an actor-critic structure, and Fig. \ref{fig:mahppo} is an illustration of the method.
The multi-agent collaborative inference scenario contains multiple UEs, so we propose to adopt multiple actor networks to provide inference decisions for the UEs.
The number of actor networks is equal to the number of UEs.
A major challenge is how to deal with the hybrid action space, and we handle this by adding three output branches in each actor networks to acquire hybrid actions \cite{DBLP:conf/ijcai/FanS0019}.
The three branches share several front layers that encode the state information and output partitioning point, offloading channel, and transmit power decisions, respectively. 
We denote parameters of the critic network and the actor networks as $\phi$ and $\theta_n$, respectively, where $\theta_n$ denotes the parameters of the corresponding actor network of UE $n$.

The input of the actor-critic network is state $s_t$, as described in Sec. \ref{sec:mdp_sub}.
The state consists of four vectors that depict the number of uncompleted tasks of each user, left local computing time and offloading data size of half-completed tasks, and the distance to the edge server of each UE.
In practice, we simply concatenate the four vectors into one and input them into the actor-critic network.

The critic network outputs the predicted state value $V^{\pi}_{\phi}(s_t)$, which is the expected cumulative reward of $s_t$.
The state value guides the update of all actor networks in the training stage.

The actor networks output the policy $\pi_{\theta}(a_t|s_t)$, which is a predicted distribution of action $a_t$ at state $s_t$.
The branches of actors that provide discrete actions output the probabilities $\{p_i(s_t)\}$ of selecting different possible actions simultaneously, and this is achieved by adding a softmax function at the end of these branches.
Thus the discrete actions is followed by a categorical distribution:
\begin{equation}
  \begin{aligned}
    \pi_{\theta_n}^d(a_{t,n}^d|s_t)=&\prod_{i=1}^{M}p_i(s_t)I_{\{a_{t,n}^d=i\}},\\
    &\forall n\in \{1,2,...,N\},\sum_{i=1}^{M}p_i(s_t)=1,
  \end{aligned}
\end{equation}
where the superscript $d$ denotes the discrete part of the action and $M$ is the number of possible actions.

Meanwhile, the branches of actors that provide continuous actions output the mean $\mu(s_t)$ and the standard deviation $\sigma(s_t)$ values of the actions.
We assume the distribution of the continuous action is Gaussian, i.e.,
\begin{equation}
    \pi_{\theta_n}^c(a_{t,n}^c|s_t)\sim\mathcal{N}(\mu(s_t),\sigma^2(s_t)),
\end{equation}
where the superscript $c$ denotes the continuous part of the action and $\mathcal{N}(\cdot)$ is the probability density function of the Gaussian distribution.
In practice, the action can be sampled from the above distributions.

\subsection{Optimization Objectives}
\label{sec:optimizing_objectives}

We introduce the optimization objectives of the critic and the actors here.
The critic network should fit an unknown state-value function and the actor networks should provide policy to maximize the fitted state value.
The optimizing objectives should guide the critic and the actors to achieve these goals. 

We first present the objective of the critic network.
Suppose there exists a trajectory of the MDP, and the trajectory describes the interaction process between the DRL agent and the environment.
Then we can obtain the rewards of each time frame in the trajectory, and the real cumulative reward at state $s_t$ is:
\begin{equation}
  V'^{\pi}(s_t)=\sum_{t'=t}^{T(\pi)}\gamma^{t'-t}r_{t'},
  \label{eq:state_value}
\end{equation}
where $\gamma \in [0,1]$ is the discount factor that balances the long-term return and the short-term return.
We use this sampled value as the expected cumulative reward to train the critic network.
The loss function is given by:
\begin{equation}
  \mathcal{L}_c(\phi)=||V^{\pi}_{\phi}(s_t)-V'^{\pi}(s_t)||_2.
  \label{eq:loss_c}
\end{equation}

\begin{figure}[!t]
  \removelatexerror
    \begin{algorithm}[H]
    \caption{MAHPPO}
    Randomly initialize parameters of the critic and the actors as $\phi$ and $\theta_n,~n\in\{1,2,...,N\}$;\\
    Initialize learning rate $\alpha$, batch size $B$, sample reuse time $K$, and initial state $s_0$;\\
    Initialize trajectory buffer $\mathbf{M}$ with size $||\mathbf{M}||$;\\
    Current state $s_t\leftarrow s_0$;\\
    \For(){step $S:=1$ to $S_{max}$}{
      \tcp{Collecting trajectory}
      \While(){$\mathbf{M}$ is not filled}{
        Sample action $a_t\sim \pi_\theta(a_t|s_t)$;\\
        Execute $a_t$ and observe reward $r_t$, the next state $s_{t+1}$;\\
        Append $(s_t,a_t,r_t,s_{t+1})$ into $\mathbf{M}$;\\
        \uIf(){$s_{t+1}$ is the terminate state}{
          $s_t\leftarrow $ reinitialized state $s_0$\;
        }
        \Else(){
          $s_t\leftarrow s_{t+1}$\;
        }
        $S\leftarrow S+1$;
      } 
      \tcp{Updating Network}
      Compute state value for states in $\mathbf{M}$ with (\ref{eq:state_value});\\
      Compute advantage for states in $\mathbf{M}$ with (\ref{eq:gae});\\
      \For(){epoch $e:=1$ to $\lfloor K\times (||\mathbf{M}||/B)\rfloor$}{
        Sample $B$ samples from $\mathbf{M}$;\\
        Compute $\mathcal{L}_c(\phi)$ and $\mathcal{L}_a(\theta)$ with these samples;\\
        Update critic: $\phi \leftarrow \phi - \alpha \nabla_{\phi} \mathcal{L}_c(\phi)$;\\
        \For(){$n:=1$ to $N$}{
          Update actor: $\theta_n \leftarrow \theta_n - \alpha \nabla_{\theta_n} \mathcal{L}_a(\theta)$;\\
        }
      }
      Clear memories in $\mathbf{M}$.
    }
    \label{algo}
    \end{algorithm}
\end{figure}

We then present the objective of the actor networks.
According to the monotonic improvement guaranteed policy gradient algorithm, trust region policy optimization \cite{DBLP:conf/icml/SchulmanLAJM15}, we maximizes the following objective:
\begin{equation}
  \max_\theta ~\mathbf{E}_t \left[\frac{\pi_\theta (a_t|s_t)}{\pi_{\theta_{old}}(a_t|s_t)}\hat A_t\right],
\end{equation}
where $\pi_\theta (a_t|s_t)$ is the current policy and $\pi_{\theta_{old}}(a_t|s_t)$ is the old policy for collecting a trajectory.
Furthermore, $\hat A_t$ is the advantage function which measures how much a specific action $a_t$ is better than the average actions at state $s_t$.
This objective is called the ``surrogate objective'' because it uses the importance sampling trick to treat samples from the old policy as the surrogate of new samples to train the actors.
In order to reduce bias of the advantage function, we employ an exponentially-weighting method to obtain the generalized advantage estimation \cite{DBLP:journals/corr/SchulmanMLJA15}:
\begin{equation}
  \hat A_t = \sum_{t'=t}^{T(\pi)}(\gamma \lambda)^{t'-t}\left(r_t+\gamma V^{\pi}_{\phi}(s_{t+1})-V'^{\pi}(s_t)\right),
  \label{eq:gae}
\end{equation}
where $\lambda \in [0,1]$ is a hyperparameter.
If $t+1>T(\pi)$, we have $V^{\pi}_{\phi}(s_{t+1})=0$.

The loss clipping strategy has been demonstrated to be helpful to train the actor \cite{DBLP:journals/corr/SchulmanWDRK17}.
Denoting $\frac{\pi_\theta (a_t|s_t)}{\pi_{\theta_{old}}(a_t|s_t)}$ as $r_t(\theta)$, the clipped loss can be formulated as:
\begin{equation}
  \mathcal{L}^{CLIP}(\theta)=\mathbf{E}_t\left[\min(r_t(\theta)\hat A_t,\text{clip}(r_t(\theta),1-\epsilon,1+\epsilon)\hat A_t)\right],
\end{equation}
where $\epsilon>0$ is a hyperparameter that controls how $r_t(\theta)$ can move away from $1$.

Hence, we can formulate the objective function of the actor networks as:
\begin{equation}
  \mathcal{L}_a(\theta)=\sum_{n=1}^N \left[\mathcal{L}^{CLIP}(\theta_n)+\zeta \mathbf{E}_t[\mathcal{H}(\pi_{\theta_n}(a_t|s_t))]\right],
\end{equation}
where $\mathcal{H}(\pi_{\theta_n}(a_t|s_t))$ is an entropy bonus that encourages exploration and $\zeta$ is a balancing hyperparameter.
We summarize the proposed MAHPPO procedure in Algorithm \ref{algo}.
Each expectation term is evaluated by the averaged results of a batch of samples.

\section{Experiments}
\label{sec:exp}

We first present the results of the designed autoencoder-based intermediate feature compression method and measure the overhead of both local inference and feature compression.
We then show the convergence performance of the proposed MAHPPO algorithm and investigate how effectively it can reduce the inference overhead of multi-agent collaborative inference with the ResNet18 model.
Finally, we evaluate our framework on two other popular DNN models: VGG11 and MobileNetV2.
Code is available at \url{https://github.com/Hao840/MAHPPO}

\subsection{Intermediate Feature Compression Performance}

In this set of experiments, we select JALAD \cite{DBLP:conf/icpads/LiHJWWZ18} as the baseline.
JALAD compresses the intermediate feature by 8-bit quantization and entropy coding. 
According to their paper, 8-bit quantization causes almost no accuracy loss.

\begin{figure}[!t]
  \centering
  \includegraphics[width=\linewidth]{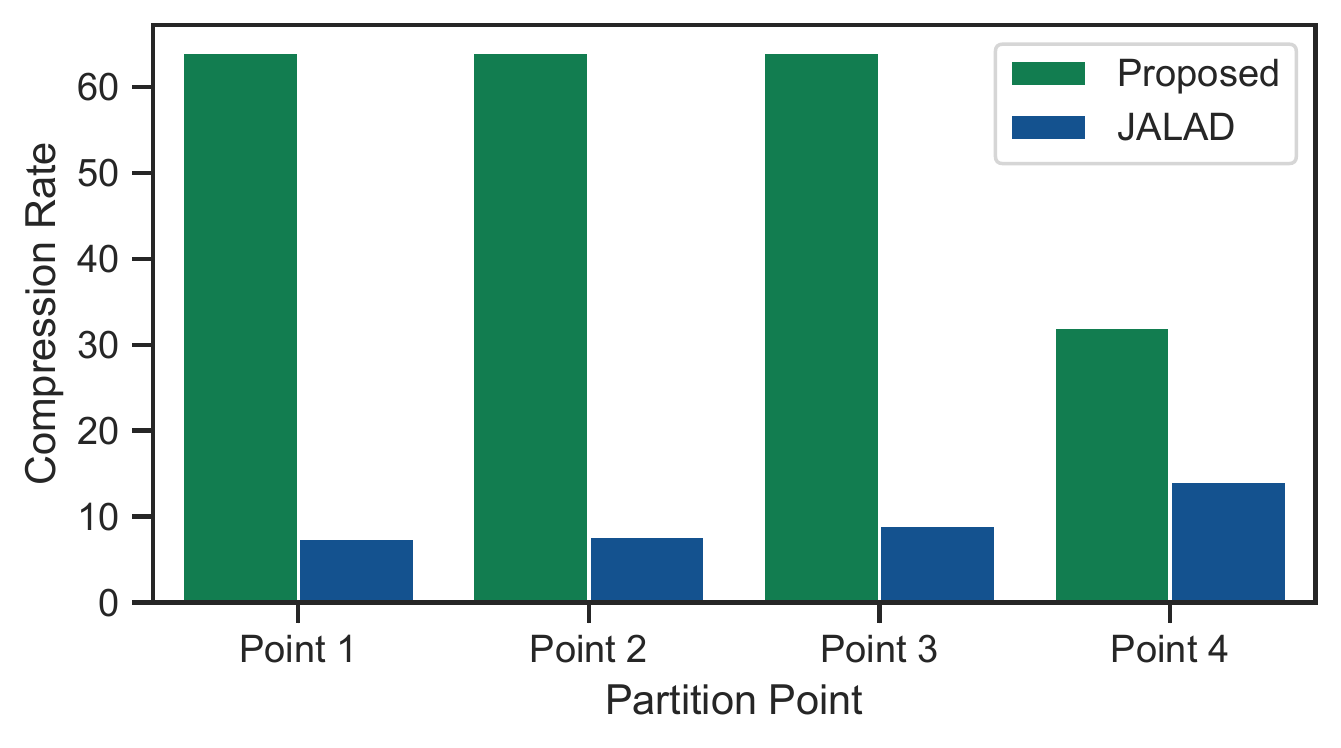}
  \caption{Compression rate comparison of the proposed lightweight autoencoder-based intermediate feature compression module and JALAD on ResNet18.}
  \label{fig:ae_main}
\end{figure}

\begin{figure}[!t]
  \centering
  \includegraphics[width=\linewidth]{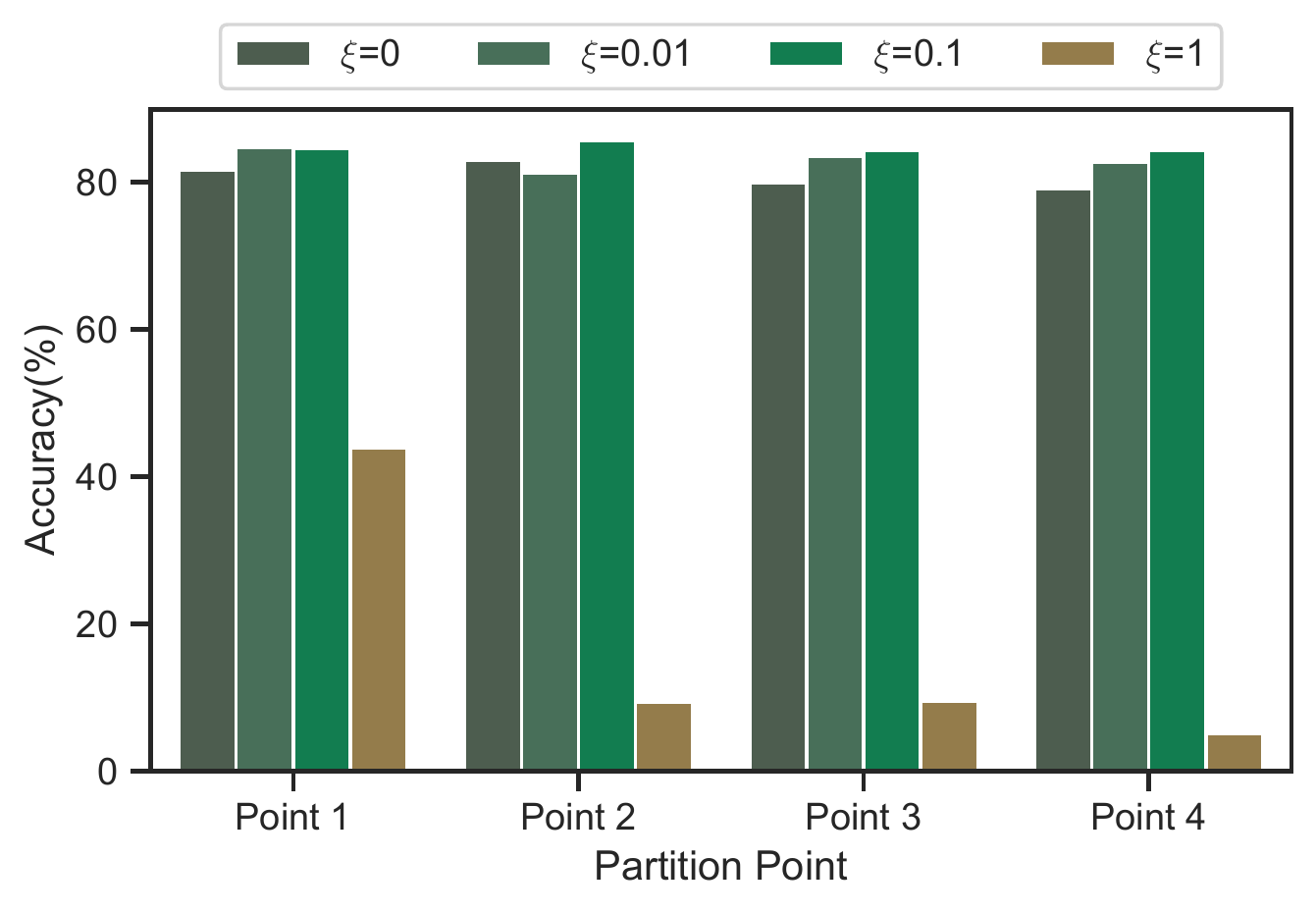}
  \caption{Comparison of different $\xi$ settings on ResNet18.}
  \label{fig:ae_xi}
\end{figure}

The experiments are conducted on the Caltech-101 dataset \cite{caltech101}, which is a computer vision dataset containing 101 classes of samples each with size 300$\times$200.
The number of samples in each class ranges from 80 to 300.
We randomly select 80\% of the samples in each class as training set and use the remaining samples as test set.
During training, each sample is first resized to 256$\times$256 and then randomly cropped to 224$\times$224 for data augmentation.
We train a ResNet18 network as the base model and select 4 partitioning points.
Specifically, a ResNet18 model processes a sample in four stages.
We adopt the output ends of the second layer in each stage, \emph{i.e.}, the batch normalization layer, as a partitioning point.
Hence, there are 4 selected points in total.
We train several autoencoders with different compression rates at each partitioning point and use 8-bit quantization to further compress the encoded feature.
Although a more significant compression rate always achieves less latency and energy consumption, we cannot increase the rate unboundedly, since the compression rate also affects the accuracy.
Hence, we select the autoencoder to achieve the maximum compression rate under a 2\% accuracy loss bound after fine-tuning, since 2\% accuracy loss is negligible for most real-world deployments, compared with the reduced latency and energy consumption.
In cases where performance degradation is unacceptable, we can replace the autoencoder with lossless compression modules, \emph{e.g.}, the entropy coding.
We train each autoencoder for 30 epochs with a 128 batch size.
The optimizer is Adam \cite{DBLP:journals/corr/KingmaB14} and the learning rate is 0.1.
The hyperparameter $\xi$ is set to 0.1.
For fine-tuning, we train the base model and the autoencoder together for 10 epochs with a small learning rate of 0.0001.

Fig. \ref{fig:ae_main} illustrates the evaluation result of the lightweight autoencoder-based intermediate feature compression module.
Compared with JALAD, it can be seen that our method can compress the intermediate feature much more effectively.
When the partitioning point moves towards the tail of the pre-trained model, the compression rate of our method decreases and that of JALAD increases.
This may because the features in the pre-trained model become sparser in deeper layers, and the simple autoencoder is not adequate to compress such data, while entropy coding is more effective for sparse data compressing.
In practice, the higher compression rate performance of our method will energize the multi-agent collaborative inference remarkably.

We further conduct experiments to compare different $\xi$ settings at different partitioning points.
Fig. \ref{fig:ae_xi} illustrate the comparison result.
The result shows that the $\xi=0.1$ setting performs the best, except at the partitioning point 1, where the performance is only slightly worse than the $\xi=0.01$ setting.
For convenience, we set $\xi$ to 0.1 in all the cases.

\subsection{Local Overhead Measurement}

\begin{figure}[!t]
  \centering
  \includegraphics[width=0.975\linewidth]{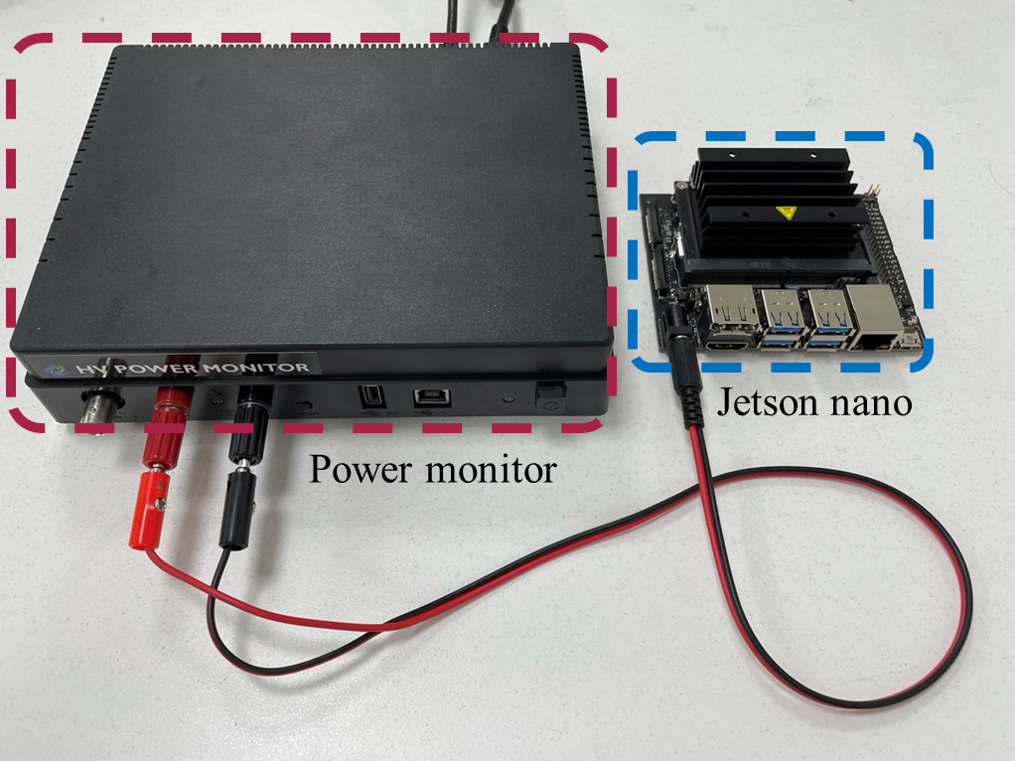}
  \caption{Inference overhead measurement hardware.}
  \label{fig:devices}
\end{figure}

We then measure latency and energy consumption of the local inference and feature compression, as described in Sec. \ref{sec:computation_model}.
We use an NVIDIA Jetson Nano as the UE and an external high voltage power monitor for the measurement.

The NVIDIA Jetson Nano contains a Quad-core ARM Cortex-A57 MPCore processor at 1.6 GHz, a 4 GB RAM Memory, and an NVIDIA Maxwell architecture GPU with 128 NVIDIA CUDA cores.
Jetson nano is driven by the power monitor, which can collect the output voltage and current at a sampling rate of 5000 samples per second.
The two devices are connected as in Fig. \ref{fig:devices}.
Before the measurement, we switch the Jetson Nano to the low power mode, i.e., the max power is 5 Watt, and turn off the dynamic voltage and frequency scaling (DVFS). 
We perform the inference on the test set of Caltech-101 dataset using Jetson nano.
The model is the trained ResNet18 with feature compression autoencoders.
We measure the inference overhead with 1000 samples in the test set.
We ignore the data of the first 100 samples and take the average of the latency and power of the last 900 samples as the data of the steady system.
We minus the power of the standby system from the averaged power to acquire the power of performing inference.
Thus the energy consumption of a single inference can be obtained by taking the product of the averaged latency and the power of performing inference.

The measurement results are shown in Fig. \ref{fig:overhead}.
The top figure illustrates the latency of local inference and feature compression at each partitioning point, and the bottom one illustrates the energy consumption.
The gray dashed line denotes the overhead when executing the full model locally.
We can see that the proposed method brings nearly no additional latency and energy consumption.
Moreover, the overhead of our method is less than that of full local inference at each partitioning point, except for the energy consumption at the last partitioning point.
The results also show that the energy cost of only running the model before the partitioning point 4 is larger than that of running the whole model.
We conjecture that this part of the model has a higher degree of parallelism for processing, as it is only consists of convolution layers.
This results in a higher average power and a less latency, while the energy, the product of the two values, becomes larger.
If we carefully assign each UE a partitioning point, an offloading channel, and a transmit power in the multi-agent collaborative inference scenario, the overall overhead can still be less than that of full local inference.
In contrast, as shown in the results, JALAD incurs more overhead than full local inference in most cases, which is because the large intermediate features need plenty of time to perform entropy coding.

\subsection{MAHPPO Convergence Performance}

After obtaining the local inference and feature compression overhead, we solve problem (P2) with the proposed MAHPPO algorithm.
We first present the experiment setups and then provide the convergence performance of MAHPPO.
Finally, we compare different parameter settings.

\subsubsection{Setup}
\label{sec:setup}

\begin{figure}[!t]
  \centering
  \includegraphics[width=\linewidth]{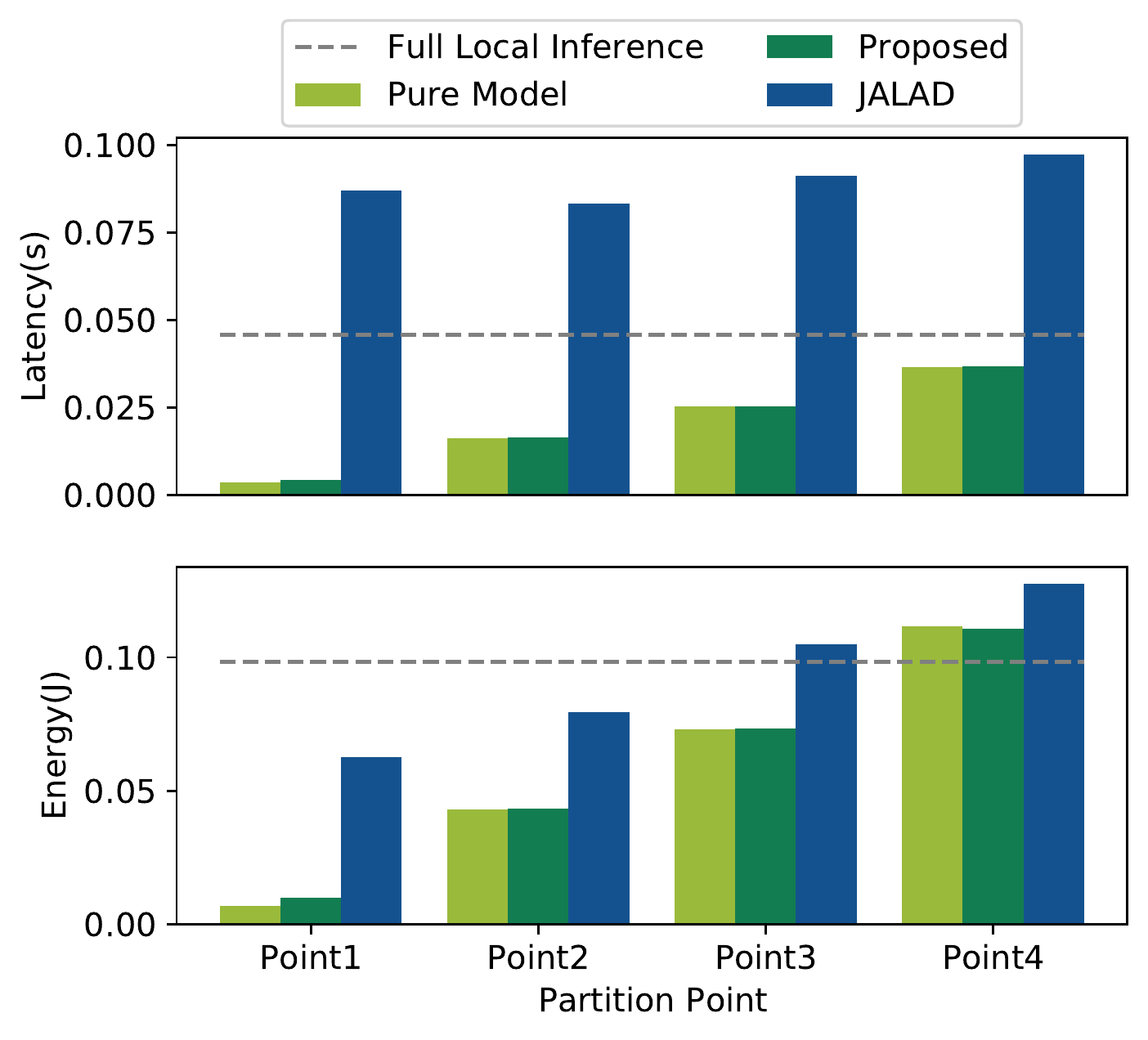}
  \caption{Latency and energy consumption of executing the front part of ResNet18 and compressing the intermediate feature on UE. The overhead of executing the full model on the UE is marked out by the gray dashed line.}
  \label{fig:overhead}
\end{figure}

\textbf{Environment.} 
In our experiment, there are $N=5$ UEs and $C=2$ channels.
The distance between UE $n$ and the edge server follows the uniform distribution $d_n\sim U[1,100]$ in meters.
UE $n$ will receive $K_n\sim Pois(\lambda_p)$ tasks at the beginning, where $Pois(\lambda_p)$ denotes the Poisson distribution, and $\lambda_p$ is its parameter set to 200.
When all tasks are completed, the current episode terminates and a new episode starts.
Distance $d_n$ and the number of tasks $K_n$ are re-initialized at the beginning of each episode.
The multi-agent collaborative inference system is assumed to locate in the urban cellular radio environment \cite{rappaport1996wireless}, where the channel gain is $g_n=d_n^{-l}$, where $l$ is the path loss exponent set to $3$.
We adopt a static channel setting, where the bandwidth of each channel is set to $\omega_{c_n}=1$MHz and the background noise power is set to $\sigma_{c_n}=10^{-9}$W \cite{DBLP:journals/ejwcn/ChenW20}.
The discount factor $\gamma$ is set to 0.95.
Furthermore, from the expression of the reward function, the expectation of completed tasks in one time frame is estimated by completed tasks in each time frame.
To ensure the accuracy of the estimation, the duration of each time frame $T_0$ should not be set to a too-small value.
Conversely, a large time frame setup helps to achieve accurate estimations, but the long offloading policy update interval also harms the performance due to the lack of flexibility.
In order to balance precision and flexibility, we set $T_0$ to 0.5 second, which is about 10 times larger than the latency of executing a full model inference on UE.
We roughly set $\beta$ to the ratio of local inference latency to energy consumption, i.e., 0.47.
In Sec. \ref{sec:beta}, we will study the impact of different $\beta$ settings.
When evaluating the performance of a trained agent, we always set $d_n=50$ and $K_n=200$ for a fair comparison.

\begin{figure}[!t]
  \centering
  \includegraphics[width=\linewidth]{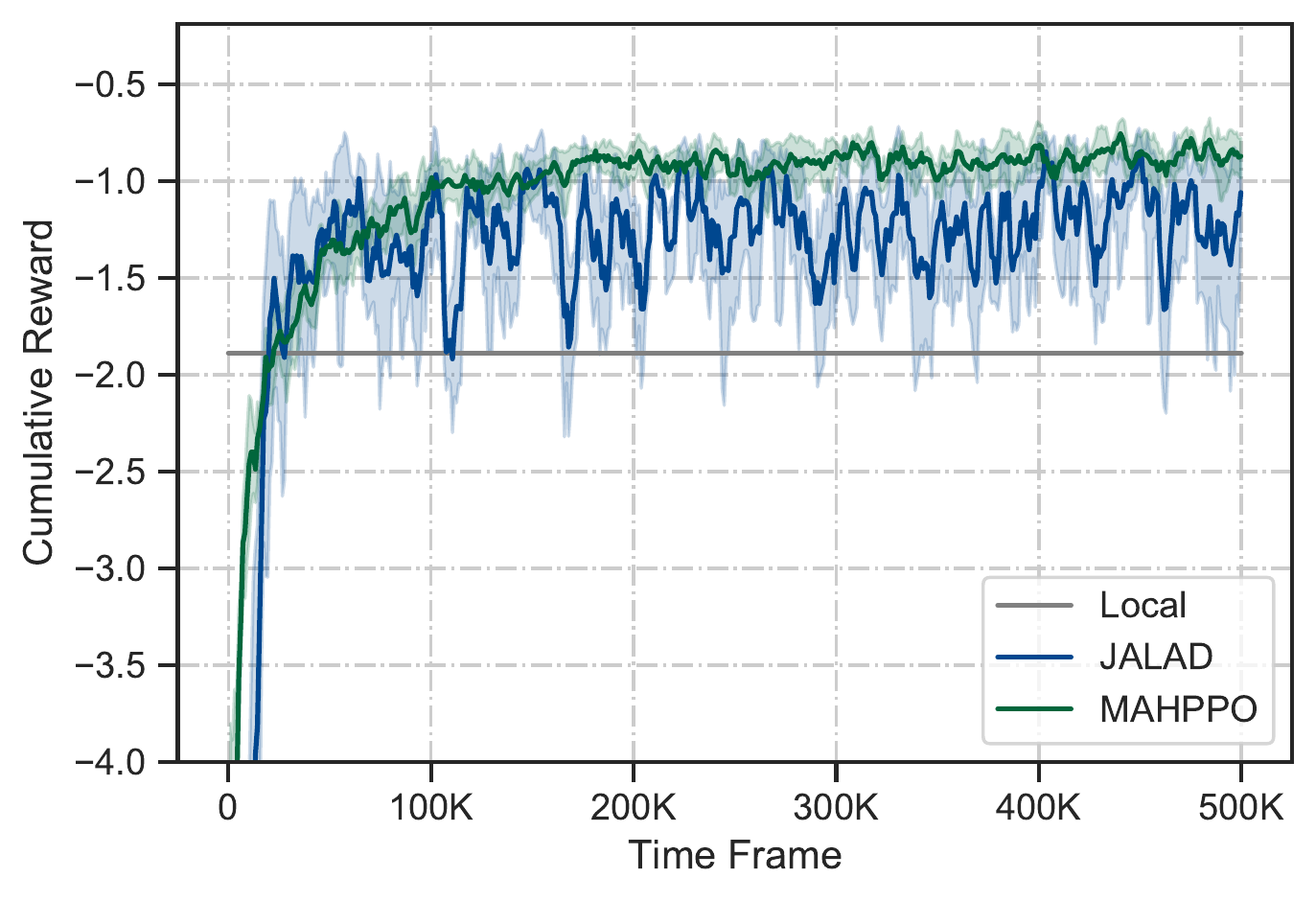}
  \caption{Convergence performance of the proposed MAHPPO algorithm and the two baselines with ResNet18.}
  \label{fig:convergence}
\end{figure}

\textbf{Agent.}
Each actor network is composed of fully connected layers, where the first 2 layers are the shared layers containing 256 and 128 neurons, respectively.
Each output branch also has 2 layers, where the first layer contains 64 neurons and the structure of the last layer is determined by the type and dimension of its corresponding action, as described in Sec. \ref{sec:ac_net}.
The critic network is composed of 4 fully connected layers with 256, 128, 64, and 1 neurons, respectively.
We train the actors and the critic for 50K steps with the same learning rate 0.0001.
The size of memory buffer $||\mathbf{M}||$ is 1024, the batch size $B$ is 256, and the sample reuse time $K$ is 10.
The setting of hyperparameters $\lambda$, $\epsilon$, and $\zeta$ follows the common setting in PPO implementation, which are 0.95, 0.2, and 0.001, respectively.

\textbf{Baselines.}
We use the following baselines:
\begin{itemize}
  \item \textbf{Local:} UEs always execute all tasks locally without the help of the edge.
  \item \textbf{JALAD:} JALAD is adopted as the feature compression method. The partitioning point, the offloading channel, and the transmit power are still obtained by the MAHPPO algorithm. The time frame is relaxed to 3 second to help convergence.
\end{itemize}

\subsubsection{Convergence Performance}
\label{sec:convergence_performance}

\begin{figure}[!t]
  \centering
  \subfloat[]{\includegraphics[height=0.33\linewidth]{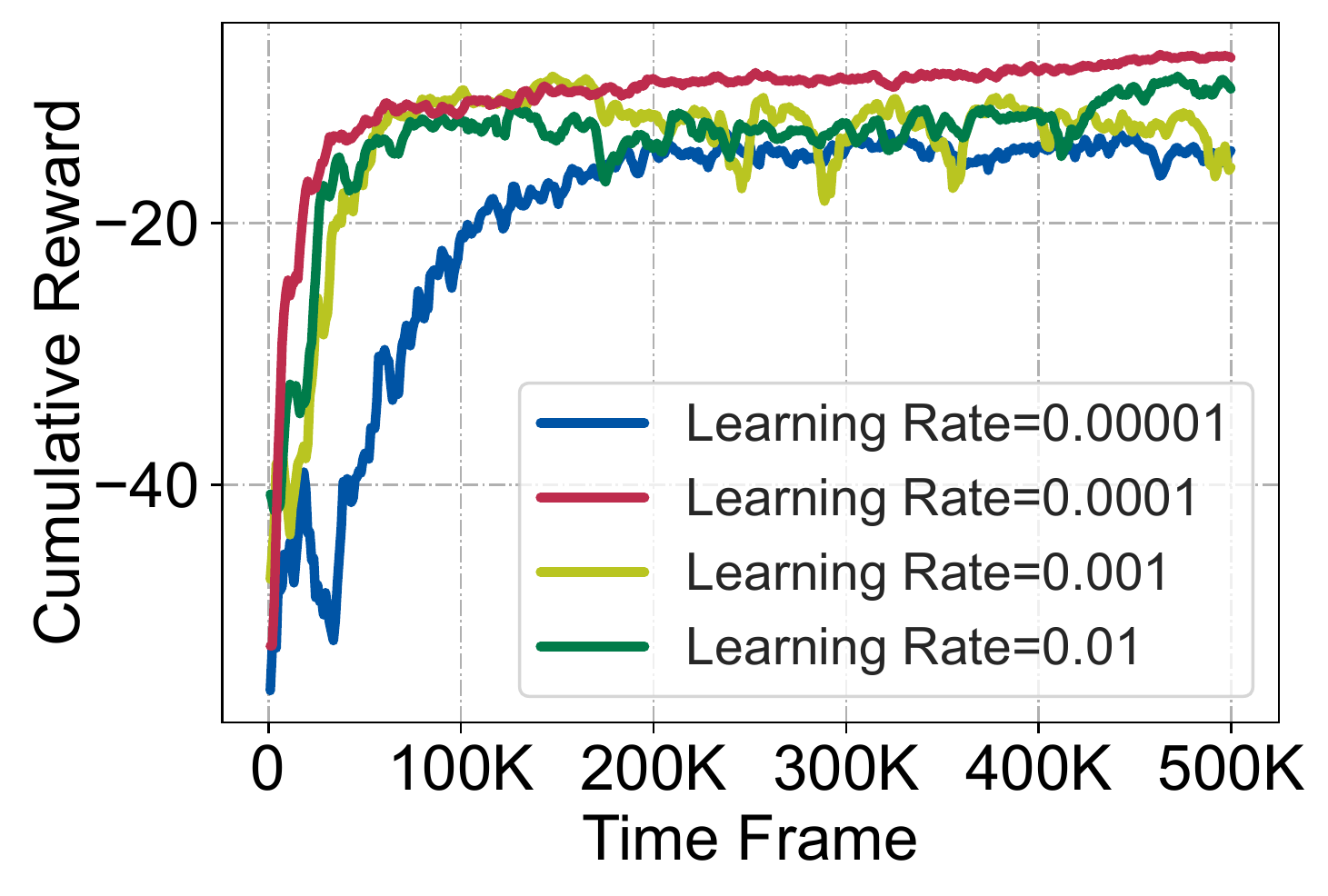}
    \label{fig:compare_lr_reward}}
  \subfloat[]{\includegraphics[height=0.33\linewidth]{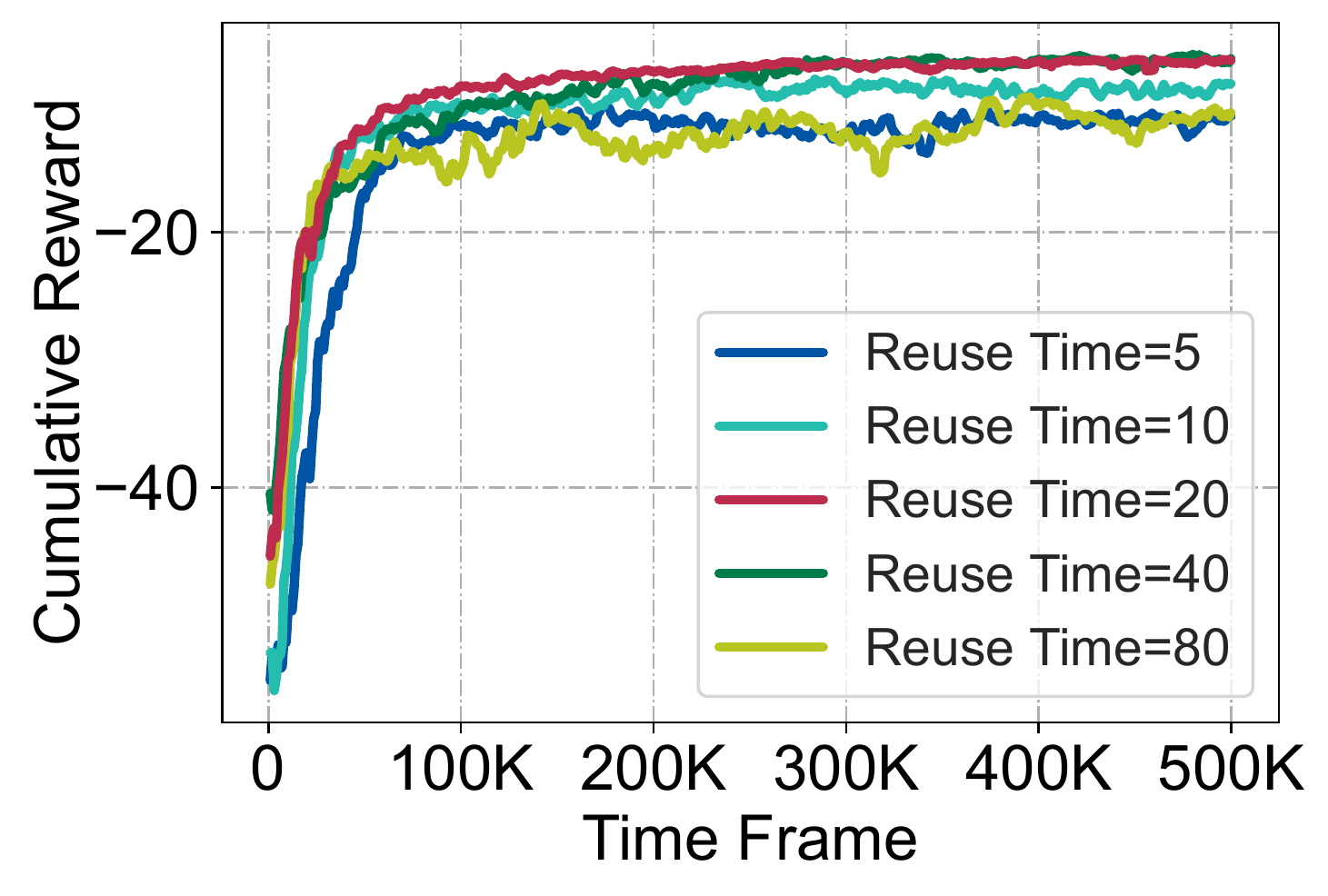}
    \label{fig:compare_repeat_reward}}
  \vfil
  \subfloat[]{\includegraphics[height=0.33\linewidth]{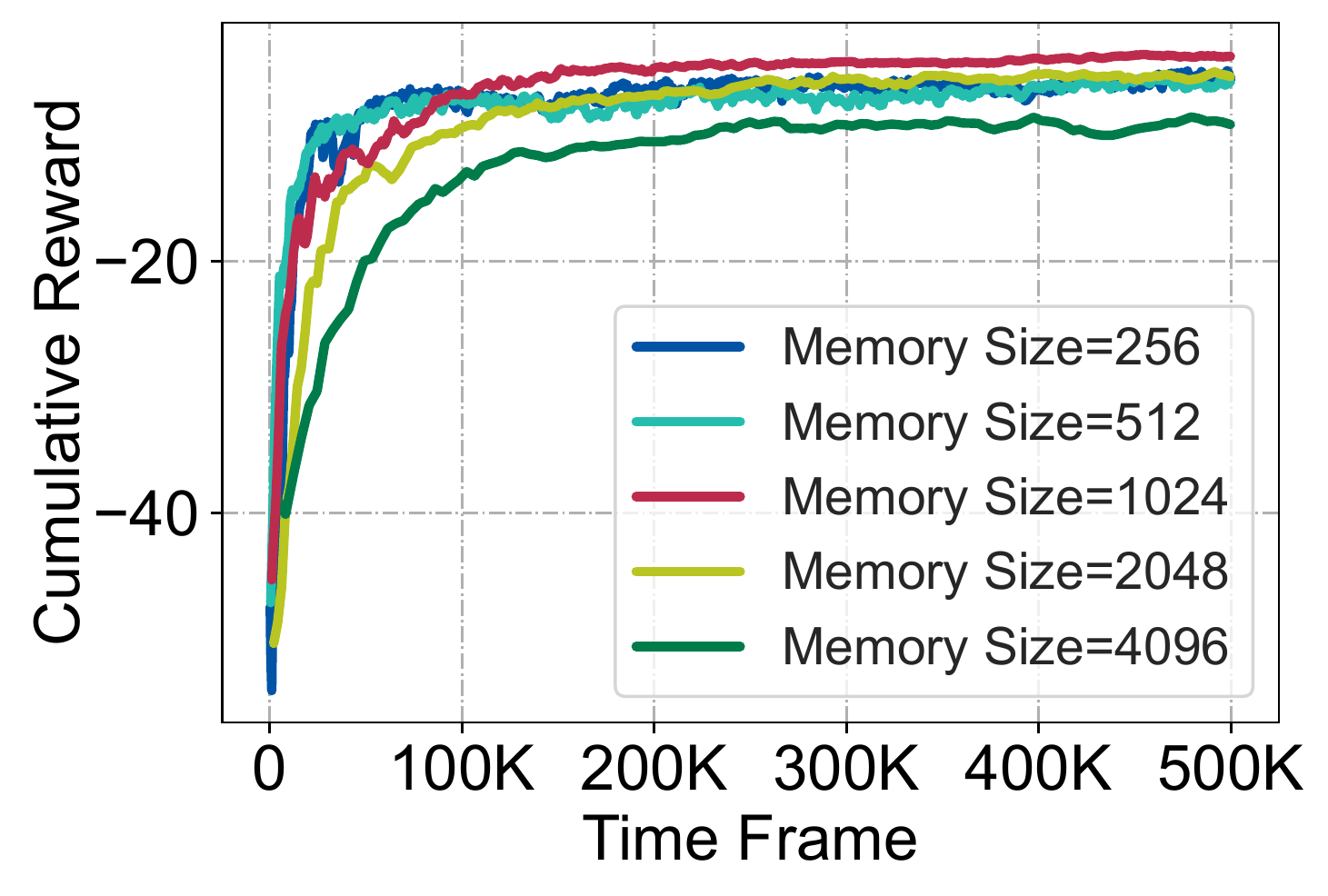}
    \label{fig:compare_step_reward}}
  \subfloat[]{\includegraphics[height=0.33\linewidth]{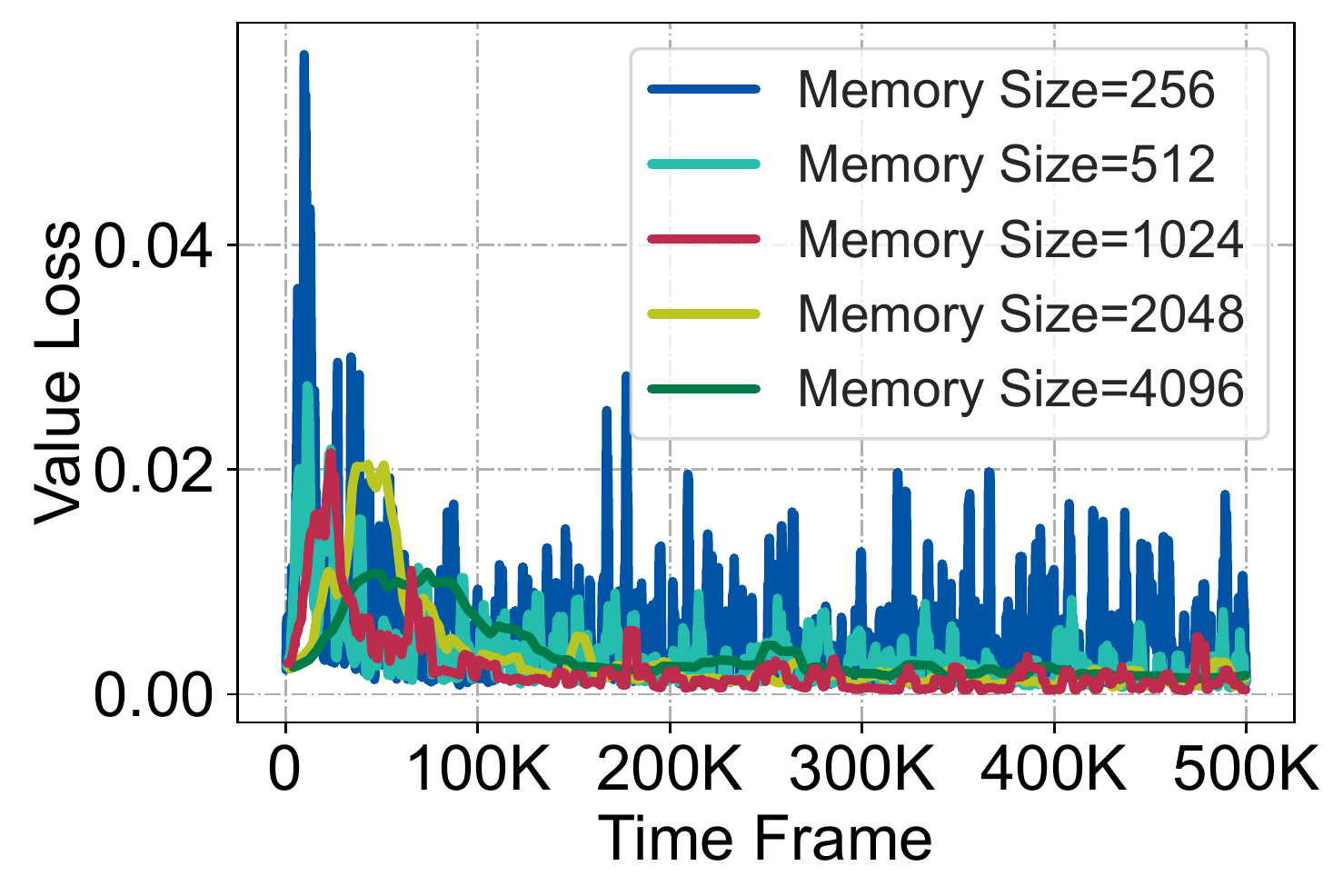}
    \label{fig:compare_step_loss}}
  \caption{Comparison of different parameter settings: (a) learning rate; (b) sample reuse time; (c, d) memory size.}
  \label{fig:compare}
\end{figure}

We present the training results of the proposed MAHPPO algorithm and the two baselines.
Each experiment setup is performed 5 times.

Fig. \ref{fig:convergence} illustrates the convergence performance of the MAHPPO algorithm and the two baselines.
The curves are smoothed by taking the average of the 5 nearest values at each point.
From the result, we can see that each method converges, and the MAHPPO algorithm performs the best.
The convergence curve of JALAD has a large fluctuating range.
The full local inference method executes all tasks on the UEs, and does not take advantage of the powerful edge computation resources, so it also performs worse than MAHPPO.
Note that the time frame of JALAD is 6 times larger than that of MAHPPO, so our total number of used time frames is much less.
Meanwhile, the reward value at each time frame is only determined by the expected overhead of a single inference, so we can roughly regard that the cumulative reward of JALAD is shrunk by 6 times.
Given this consideration, we find that JALAD performs the worst, as the complex intermediate feature compressor of JALAD makes it inefficient to decouple DNN at intermediate layers. Also, the large compressed features cause severe interference in the wireless channel.
In this case, executing all tasks locally is the best policy. Still, it will be hard for the DRL agent to learn, as the significant interference makes the environment extremely complex.
By taking the 6 times shrinkage into consideration, JALAD even performs worse than local inference.

\subsubsection{Hyperparameter Analysis}

We study how different main hyperparameter settings influence the convergence performance.
Different learning rates, sample reuse time, and memory size settings are compared.
The results are illustrated in Fig. \ref{fig:compare}.

In Fig. \ref{fig:compare}(a), the results indicate that training the agent with a small learning rate converges slow, while a large learning rate incurs unstable cumulative reward and hinders the agent to explore the best policy .
So we choose 0.0001 as the learning rate of the actors and the critic.
The sample reuse time indicates how many times a sample is used to train the agent.
In Fig. \ref{fig:compare}(b), we can see that a small sample reuse time setting incurs a slow and poor convergence.
Given increased reuse time, the convergence becomes faster and the policy becomes better.
However, a too large reuse time setup, e.g., 80, also results in a poor convergence.
So the sample reuse time is set to 20 to balance the accuracy and the computation complexity.
Fig. \ref{fig:compare}(c) and Fig. \ref{fig:compare}(d) illustrate the cumulative reward and the value loss of different memory size settings, respectively.
The value loss is defined in Eq. (\ref{eq:loss_c}).
In common PPO implementations, the batch size is set to a quarter of the memory size.
We follow this setting in the comparison.
From the result, we observe that training with a small memory size usually converges faster, since it leads to a high network update frequency.
However, the fast convergence leads to a poor policy since it over-uses the initial experiences.
Furthermore, a small memory size, which means that the batch size is also small, lead to unstable value loss decrease.
When the scenario becomes complicated, this may harm the convergence of the actor networks.
On the contrary, a large memory size setup also leads to poor performance, because it result in a long network update interval.
Hence, the memory size is set to 1024.
In the following experiments, we use the same hyperparameter settings.

\subsubsection{UE Number Comparison}

\begin{figure}[!t]
  \centering
  \includegraphics[width=\linewidth]{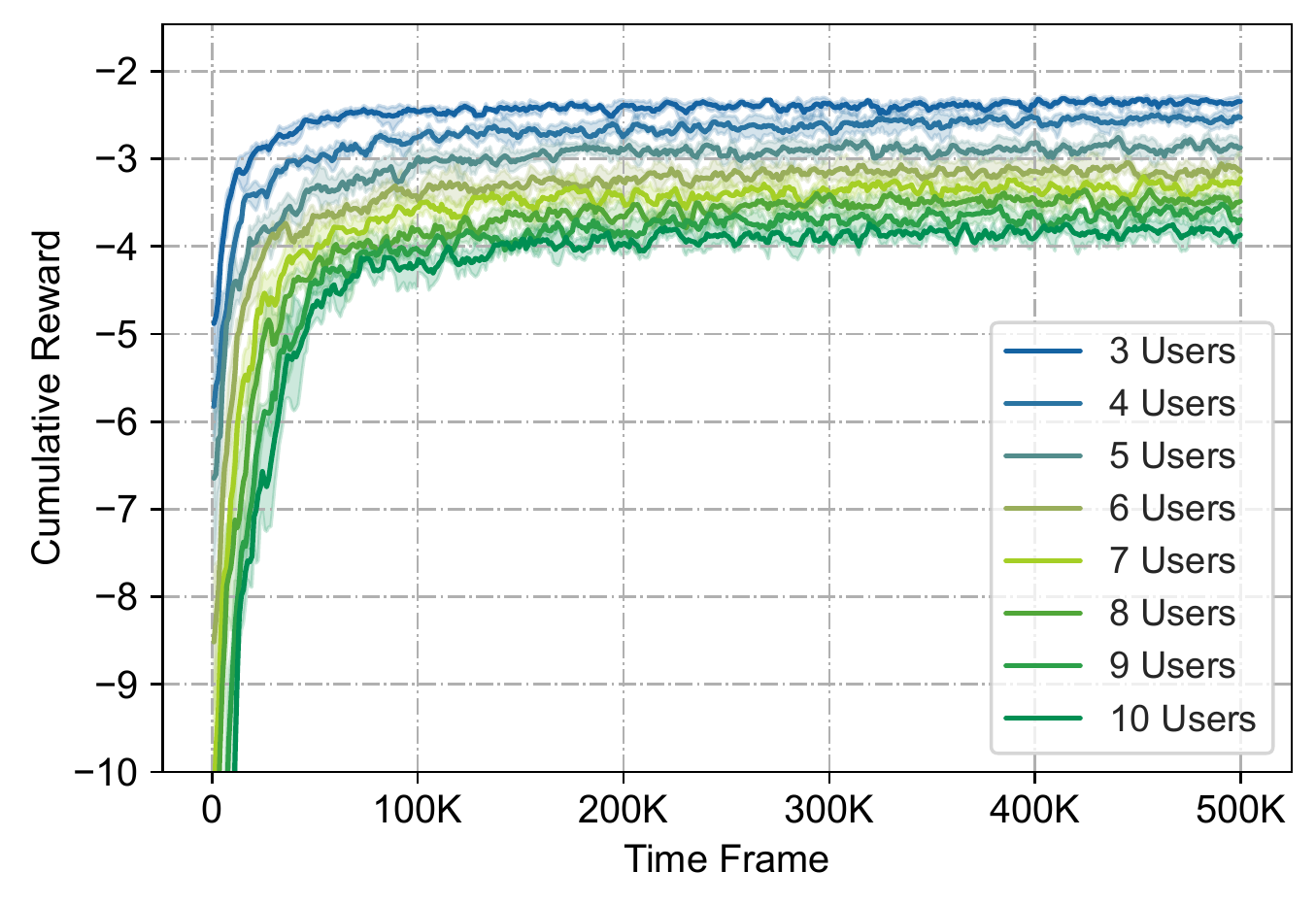}
  \caption{Convergence performance comparison of different UE number settings with ResNet18.}
  \label{fig:user_num_converge}
\end{figure}

We further compare the convergence performance of different numbers of UEs to study how the UE number influence the convergence when the number of channels is fixed.
The UE number $N$ ranges from 3 to 10.
Other details are the same as previous experiments.

Fig. \ref{fig:user_num_converge} illustrates the comparison result.
We observe that experiments using any UE number settings converge, and a larger UE number setting leads to a slower convergence.
This is because more UEs bring a more severe interference within wireless channels, and thus it is hard to find an optimal policy.
Furthermore, the convergent value of larger UE number setting tends to be smaller, since the overall wireless channel resources are fixed.

\subsection{Inference Latency and Energy Consumption}

We evaluate the performance of the MAHPPO algorithm by comparing the overhead of multi-agent collaborative inference.
We also compare the impact of different hyperparameter $\beta$ settings.
Evaluation is performed under the settings provided in Sec. \ref{sec:setup}.

\subsubsection{Overhead Saving Performance}

\begin{figure}[!t]
  \centering
  \includegraphics[width=\linewidth]{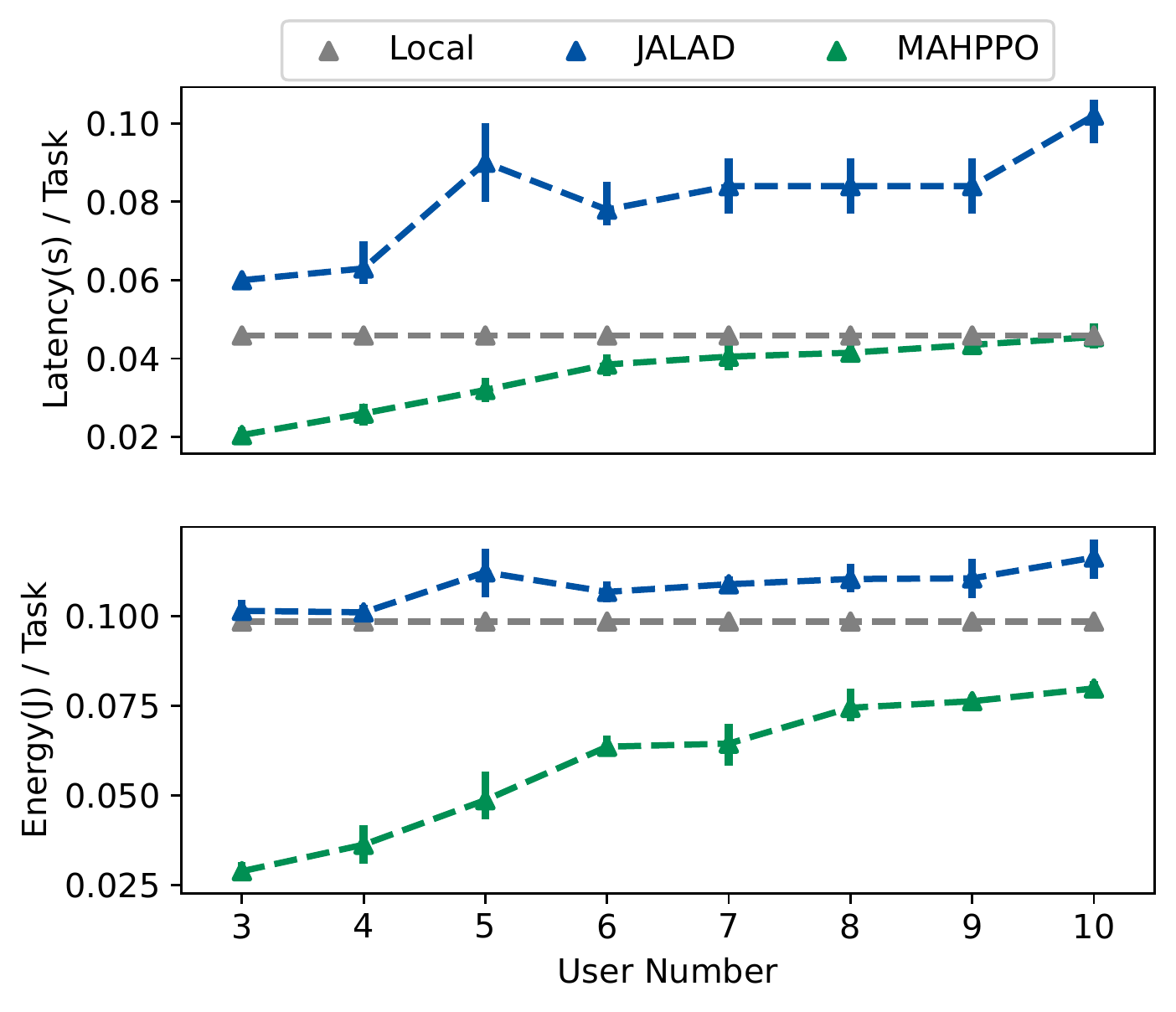}
  \caption{Averaged inference overhead of ResNet18 under different UE number settings.}
  \label{fig:user_num_overhead}
\end{figure}

We first evaluate the averaged inference latency and energy consumption of multi-agent collaborative inference when adopting the MAHPPO algorithm and the two baselines.
The results are provided in Fig. \ref{fig:user_num_overhead}.

We can see from the results that with increased number of UEs, the averaged inference latency and energy consumption of adopting the MAHPPO algorithm continue increase, as the available channel resources are fixed.
Also, the averaged inference latency and energy consumption of JALAD exhibit an increasing trend because the larger interference impedes the convergence more.
The averaged inference overhead of adopting the full local inference strategy remains constant because UEs do not need to compete for limited channel resources.
Compared with the full local inference strategy, the proposed method always achieves a lower inference overhead.
However, JALAD always performs worse than inference locally, which is in line with the analysis in Sec. \ref{sec:convergence_performance}.
When the UE number is $3$, our method can reduce $56\%$ of the inference latency and save $72\%$ of the energy consumption.
We can tune the hyperparameter $\beta$ to achieve a balance between latency and energy consumption.
Since the wireless channel resources are limited, the overhead curves of MAHPPO become close to that of local inference when the user number is increased.

\subsubsection{Impact of Hyperparameter \texorpdfstring{$\beta$}{beta}}
\label{sec:beta}

We study how the hyperparameter $\beta$ affects the inference overhead.
We set $\beta=0.01$, $0.1$, $1$, $10$, $100$, and $1000$.
The number of UE is set to 5.

Fig. \ref{fig:beta} presents the comparison results.
We run experiments for each $\beta$ setup 5 times, and the belts of shadows denote the standard deviation.
We can see that when the value of $\beta$ is increased, the inference latency increases and the energy consumption decreases.
When $\beta<0.1$, the change of its value has little impact on the inference overhead.
This is because the inference latency has a lower bound, the agent cannot further decrease the already small inference latency with little sacrifice of energy consumption.
When $\beta>0.1$, the inference latency and the energy consumption change significantly with $\beta$.
By adjusting the value of $\beta$, we can satisfy different overhead constraint requirements.

\begin{figure}[!t]
  \centering
  \includegraphics[width=\linewidth]{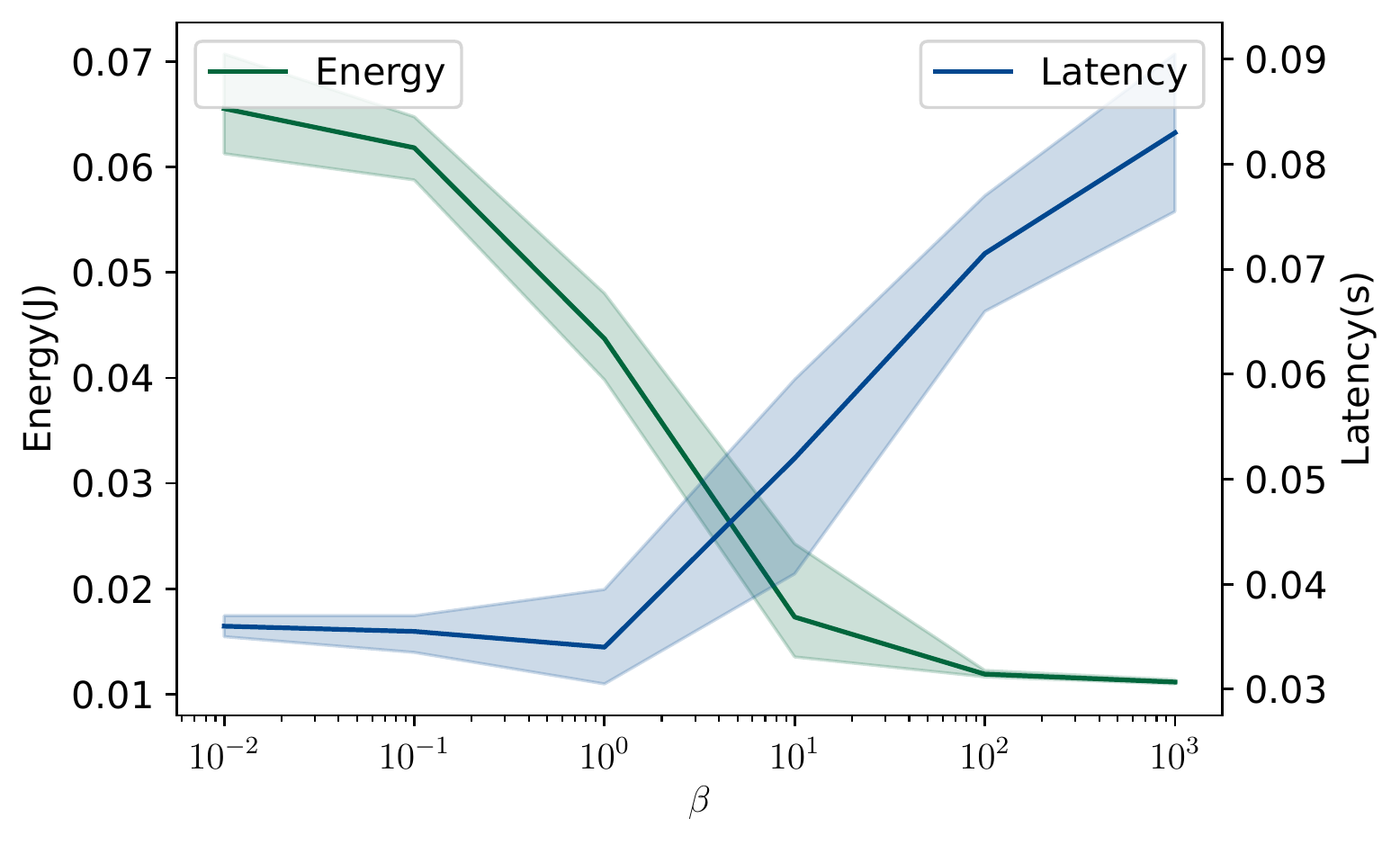}
  \caption{Impact of different hyperparameter $\beta$ settings. The belts of shadows present the standard deviation}
  \label{fig:beta}
\end{figure}

\subsection{Result with More Network Architectures}

\begin{figure*}[!t]
  \centering
  \subfloat[]{\includegraphics[width=0.475\linewidth]{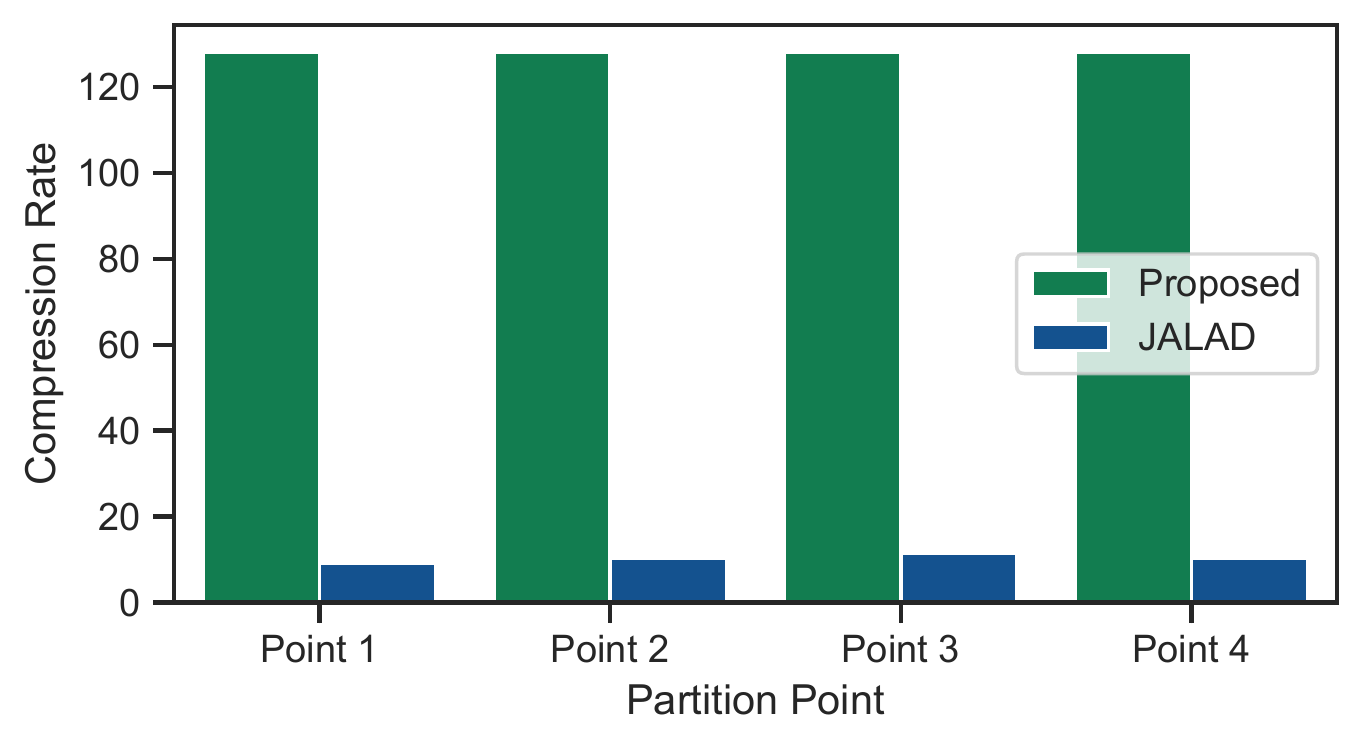}
    \label{fig:ae_main_v}}
  \hfil
  \subfloat[]{\includegraphics[width=0.475\linewidth]{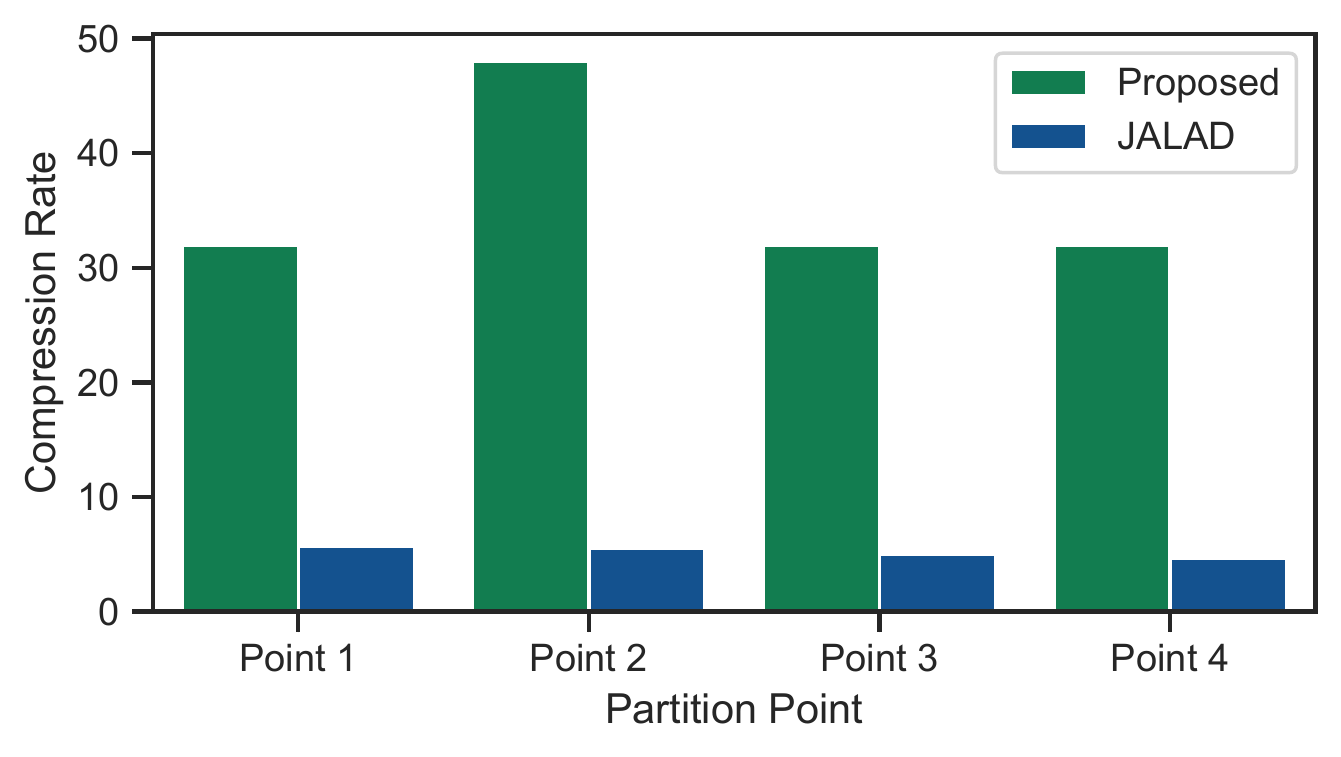}
    \label{fig:ae_main_m}}
  \vfil
  \subfloat[]{\includegraphics[width=0.475\linewidth]{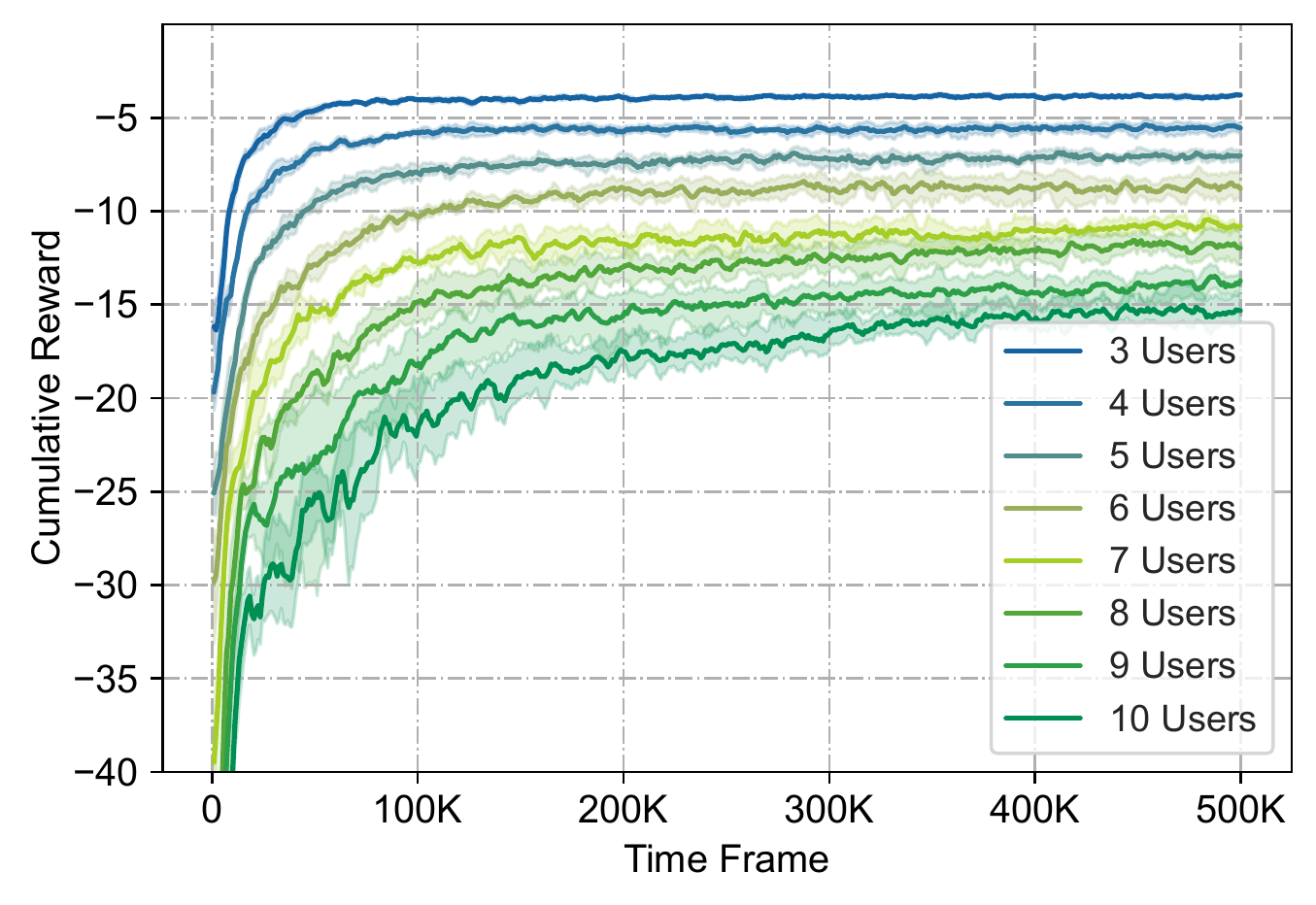}
    \label{fig:user_num_v}}
  \hfil
  \subfloat[]{\includegraphics[width=0.475\linewidth]{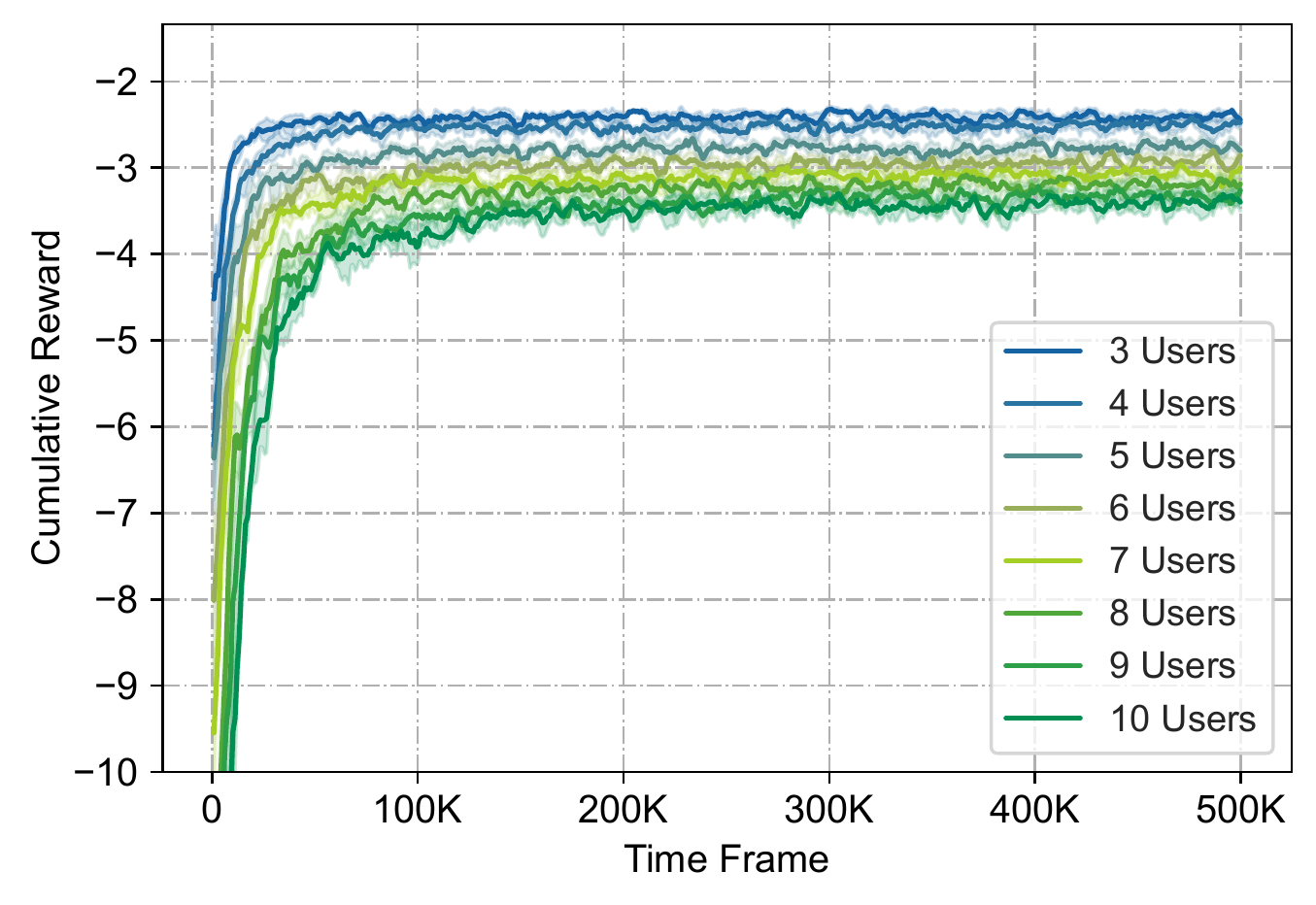}
    \label{fig:user_num_m}}
  \vfil
  \subfloat[]{\includegraphics[width=0.475\linewidth]{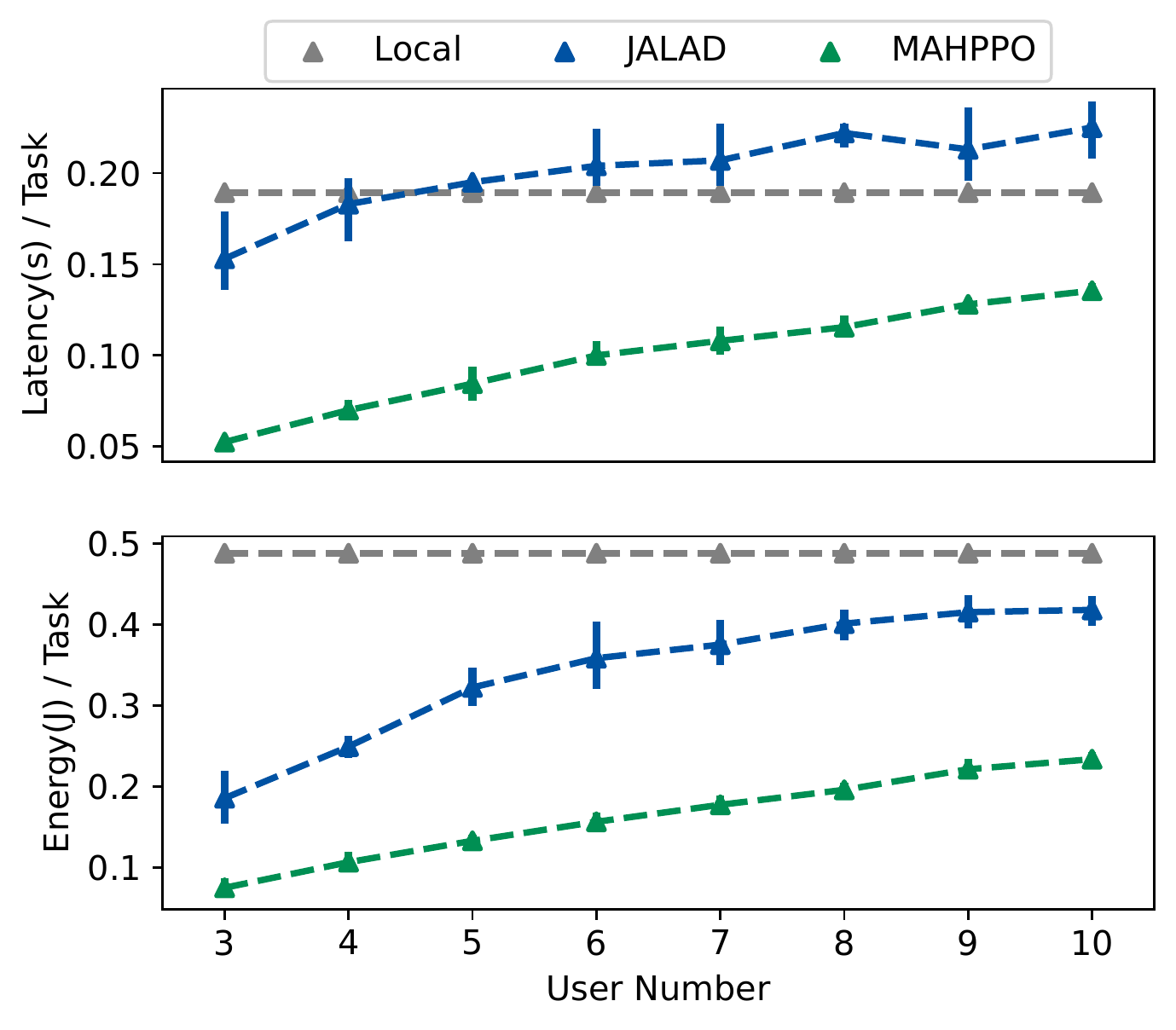}
    \label{fig:user_num_overhead_v}}
  \hfil
  \subfloat[]{\includegraphics[width=0.475\linewidth]{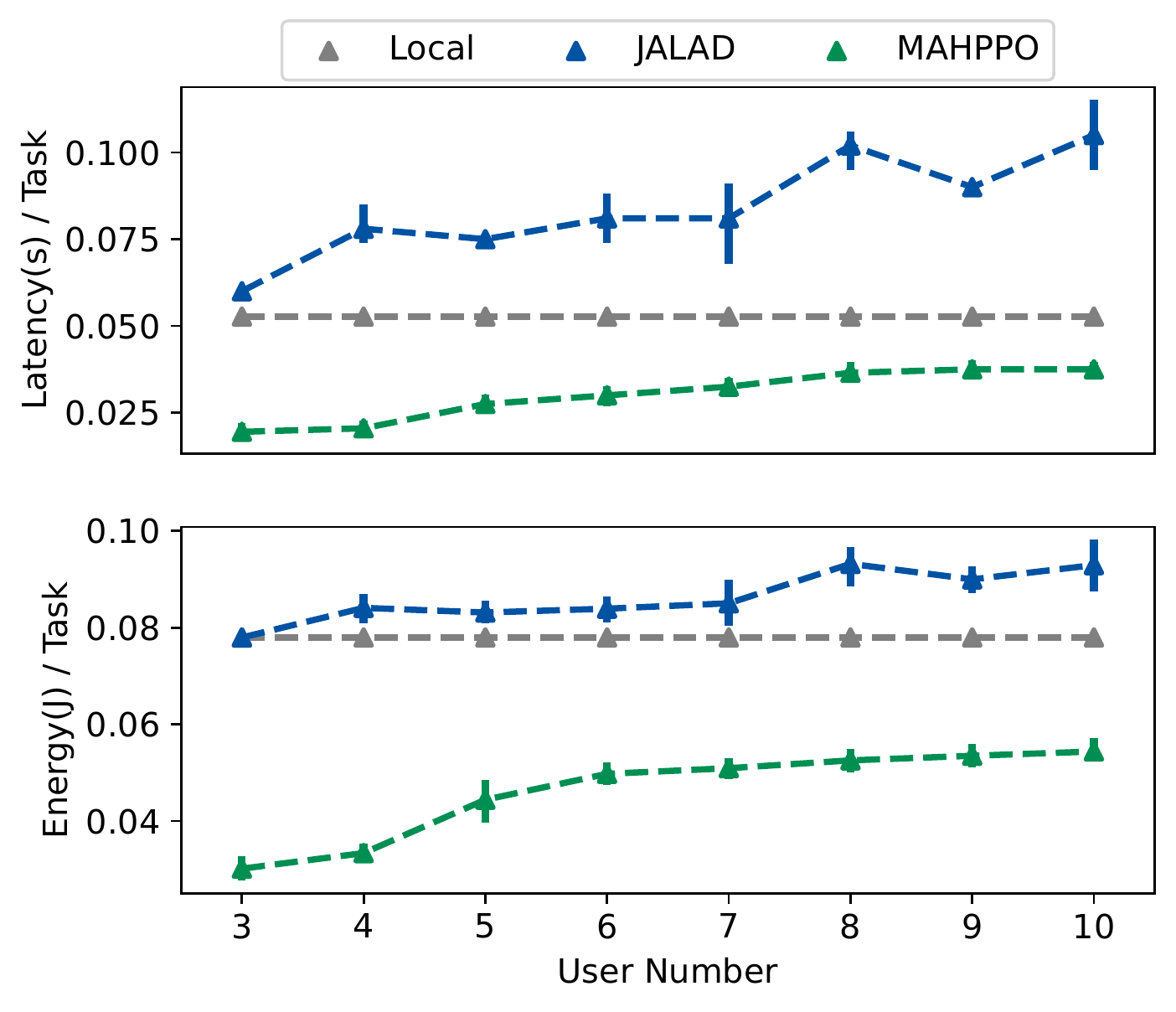}
    \label{fig:user_num_overhead_m}}
  \caption{Results with more network architectures: intermediate feature compression rate on (a) VGG11 and (b) MobileNetV2; convergence performance of different UE number settings on (c) VGG11 and (d) MobileNetV2; averaged inference overhead of different UE number settings on (e) VGG11 and (f) MobileNetV2.}
  \label{fig:supplementary}
\end{figure*}

To further verify the effectiveness of the proposed MAHPPO algorithm, we test it on two other network architectures: VGG11 \cite{DBLP:journals/corr/SimonyanZ14a} and MobileNetV2 \cite{DBLP:conf/cvpr/SandlerHZZC18}.
We evaluate their convergence performance and overhead saving performance under different UE number settings.

For VGG11, we select 4 partitioning points after MaxPool layers.
For MobileNetV2, we select 4 partitioning points after the last batch normalization layer of residual blocks containing a downsampling layer.
Other settings are the same as previous experiment setups.
Fig. \ref{fig:supplementary} reports the results.
The convergence performance of both networks is similar to that of ResNet18, and the overhead saving performance of the MAHPPO algorithm is also similar.
However, JALAD is better than local inference on VGG11, especially for energy consumption.
This is because the inference overhead of VGG11 is high, which makes the overhead of entropy coding ignorable.
In this case, JALAD is an efficient intermediate feature compression approach, and thus achieves better performance than local inference.
The results of either network demonstrate the effectiveness of the MAHPPO algorithm.

\section{Related Works}
\label{sec:related_works}

\subsection{Computation offloading}
Edge devices suffer from low computation resources and limited energy storage, which constrains the deployment of powerful applications.
To remedy this drawback, offloading a part of tasks to an external processor, termed computation offloading, has become popular.
Some early works offloaded tasks to the cloud.
For example, Zhang \textit{et al.} \cite{DBLP:journals/twc/ZhangWW15} investigated a mobile cloud system with a stochastic wireless channel and minimized the energy consumption of the mobile through offloading.
However, offloading to the cloud usually incurs significant latency in communication and lacks flexibility.
To address these issues, MEC places edge servers on the BS to serve users, providing a promising edge computing paradigm for quick and flexible applications and services. 
Chen \textit{et al.} \cite{DBLP:journals/ton/ChenJLF16} achieved efficient computation offloading in a multi-user multi-channel MEC scenario via game theory.
Guo \textit{et al.} \cite{DBLP:journals/ton/GuoZJLL18} integrated MEC into small cell networks (SCN).
They used genetic algorithm (GA) and particle swarm optimization (PSO) to find the optimal offloading decision.
Optimization problems in MEC are usually NP-hard, since both offloading channel selection and computation resources allocation require solving integer linear programming (ILP) problems.
Traditional approaches for solving such problems suffer from high computational complexity and slow convergence \cite{DBLP:journals/corr/abs-2001-09223}, which limits their application in the real world.

To this end, DRL provides a promising solution, and some DRL-based algorithms have been proposed for MEC systems.
Huang \textit{et al.} \cite{HUANG201910} studied the scenario where multiple UEs have multiple tasks to be offloaded to the edge server.
They adopted DQL to make task offloading and resource allocation decisions to minimize the overall overhead.
Similarly, some other previous works also adopted DQL to assist offloading \cite{DBLP:conf/wcnc/Li0LL18, 8657791, DBLP:journals/iotj/ZhangZLHMZ19}.
Chen \textit{et al.} \cite{Chen18IOT} used Double-DQL to obtain an optimal computation offloading policy.
Recently, some researchers used the DDPG algorithm to help MEC.
Chen \textit{et al.} \cite{DBLP:journals/ejwcn/ChenW20} adopted DDPG to minimize the long-term average energy consumption and latency at each user in the scenario with stochastic wireless channels and task arrival.
Their work ignored the latency at MEC servers.
Nath \textit{et al.} \cite{DBLP:conf/icc/NathLWF20} discretized the resource of the MEC server and trained a DRL agent via DDPG for offloading channel selection and MEC server resource allocation.
Xie \textit{et al.} \cite{DBLP:journals/iotj/XieTLYH21} formulated the optimization task as a partially observable MDP to model the fast time-varying wireless channel and proposed a deep recurrent Q-network to find the optimal offloading solution.

\subsection{Collaborative Inference}
Collaborative inference splits a DNN into two or more parts and executes each part on different devices.
Collaborative inference emphasizes the deployment of DNNs, while MEC focuses on the offloading process.
The split of DNNs can be conducted either horizontally or vertically.
We investigate the vertical split here, \textit{i.e.}, the split direction and the forward propagate direction are vertical.
Previous works \cite{DBLP:conf/icassp/HauswaldMZDCM14,DBLP:journals/tii/HuNQZL17} extracting features at local to assist the classification in the cloud establish prototypes of collaborative inference.
Kang \textit{et al.} \cite{DBLP:conf/asplos/KangHGRMMT17} proposed the collaborative inference paradigm, where the split can be conducted at any layer in a DNN, not only the last one.
They evaluated the overhead of DNN inference on the edge and cloud, and found that a favorable split decision reduces inference latency and saves energy significantly.
He \textit{et al.} \cite{DBLP:journals/iotj/HeG0QQ20} proposed to offload DNN inference tasks to multiple edge servers.
They formulated an MINLP problem and decomposed it to obtain partitioning deployment and resource allocation decisions.
To further speed up the inference, Li \textit{et al.} \cite{DBLP:conf/icann/LiLWDZF18} quantized weights of the front DNN part deployed on the UE, and the full-precision inference is adopted on the cloud.
Some DNN architectures are not linear, \textit{i.e.}, there exist cross-layer connections, such as DenseNet \cite{DBLP:conf/cvpr/HuangLMW17}, where a single point cannot achieve the partition.
To bridge this gap, Hu \textit{et al.} \cite{DBLP:conf/infocom/HuBWL19} depicted DNNs by direct acyclic graphs and adopted graphic algorithms to find the partitioning strategy. 
Yousefpour \textit{et al.} \cite{DBLP:journals/corr/abs-1909-00995} studied the failure of physical nodes when distributing DNN over multiple devices.
They introduced hyper connections between multiple devices to achieve robust collaborative inference. 
There are very few works combining MEC with collaborative inference to our best knowledge. 
Yang \textit{et al.} \cite{DBLP:conf/racs/YangLLMW19} studied a multi-agent MEC scenario with collaborative inference tasks.
However, they assumed a fixed uplink rate, ignoring the interference in the wireless channel.

There are also works compressing intermediate features at partitioning points to reduce the offloading latency.
For example, Shi \textit{et al.} \cite{DBLP:conf/infocom/ShiHZNZG19} pruned channels of a DNN, so the size of feature maps at partitioning points is reduced.
Li \textit{et al.} \cite{DBLP:conf/icpads/LiHJWWZ18} adopted quantization and Huffman coding to achieve feature compression.
Some works also designed convolution autoencoders to compress intermediate features.
Eshratifar \textit{et al.} \cite{DBLP:conf/islped/EshratifarEP19} was the first to propose to use autoencoders to compress features.
They compressed the features with depthwise separable convolution \cite{DBLP:conf/cvpr/Chollet17} and then the output with the JPEG algorithm.
Jankowski \textit{et al.} \cite{DBLP:conf/spawc/JankowskiGM20} adopted a pure DNN architecture, where both the encoder and the decoder consist of a convolution layer, a batch normalization layer, and an activation function.
However, these autoencoder compressors are complex, bringing significant additional overhead in inference.
There are works adopting other autoencoders \cite{DBLP:conf/icc/ShaoZ20, DBLP:conf/sensys/Yao0LWLSA20}, but these works also suffer from the complex architecture.

To tackle the above problem, we combine MEC with collaborative inference and assume a real-world communication environment.
Also, we design a lightweight autoencoder to reduce the size of offloaded data with little extra latency and energy consumption.

\section{Conclusion}
\label{sec:conclusion}

In this paper, we proposed a novel framework to achieve efficient multi-agent collaborative inference, where we first proposed an autoencoder-based intermediate feature compression method to enable flexible partitioning point selection.
Then we formulated the problem, which was then redefined to facilitate optimization.
Finally, we proposed an MAHPPO algorithm, which was composed of multiple actor networks and one global critic network, to solve the optimization problem.
We conducted extensive experiments to verify the effectiveness of the proposed framework.
The results showed that our framework can remarkably reduce the inference latency and energy consumption in multi-agent collaborative inference scenario.

\ifCLASSOPTIONcaptionsoff
  \newpage
\fi

\bibliographystyle{IEEEtran}
\bibliography{ref.bib}

\begin{thebibliography}{10}
\providecommand{\url}[1]{#1}
\csname url@samestyle\endcsname
\providecommand{\newblock}{\relax}
\providecommand{\bibinfo}[2]{#2}
\providecommand{\BIBentrySTDinterwordspacing}{\spaceskip=0pt\relax}
\providecommand{\BIBentryALTinterwordstretchfactor}{4}
\providecommand{\BIBentryALTinterwordspacing}{\spaceskip=\fontdimen2\font plus
\BIBentryALTinterwordstretchfactor\fontdimen3\font minus
  \fontdimen4\font\relax}
\providecommand{\BIBforeignlanguage}[2]{{%
\expandafter\ifx\csname l@#1\endcsname\relax
\typeout{** WARNING: IEEEtran.bst: No hyphenation pattern has been}%
\typeout{** loaded for the language `#1'. Using the pattern for}%
\typeout{** the default language instead.}%
\else
\language=\csname l@#1\endcsname
\fi
#2}}
\providecommand{\BIBdecl}{\relax}
\BIBdecl

\bibitem{DBLP:conf/cvpr/HeZRS16}
K.~He, X.~Zhang, S.~Ren, and J.~Sun, ``Deep residual learning for image
  recognition,'' in \emph{IEEE Conference on Computer Vision and Pattern
  Recognition, CVPR}, Las Vegas, NV, USA, June 27-30, 2016, pp. 770--778.

\bibitem{flops}
\BIBentryALTinterwordspacing
V.~Weaver, ``The top 50 fastest computers in the weaver research group.''
  [Online]. Available: \url{https://github.com/deater/performance_results}
\BIBentrySTDinterwordspacing

\bibitem{MEC}
ETSI, ``Mobile-edge computing – introductory technical white paper,'' 2014.

\bibitem{DBLP:conf/asplos/KangHGRMMT17}
Y.~Kang, J.~Hauswald, C.~Gao, A.~Rovinski, T.~N. Mudge, J.~Mars, and L.~Tang,
  ``Neurosurgeon: Collaborative intelligence between the cloud and mobile
  edge,'' in \emph{International Conference on Architectural Support for
  Programming Languages and Operating Systems, ASPLOS}, Xi'an, China, April
  8-12, 2017, pp. 615--629.

\bibitem{DBLP:conf/islped/EshratifarEP19}
A.~E. Eshratifar, A.~Esmaili, and M.~Pedram, ``Bottlenet: {A} deep learning
  architecture for intelligent mobile cloud computing services,'' in
  \emph{IEEE/ACM International Symposium on Low Power Electronics and Design,
  ISLPED}, Lausanne, Switzerland, July 29-31, 2019, pp. 1--6.

\bibitem{DBLP:conf/icc/ShaoZ20}
J.~Shao and J.~Zhang, ``Bottlenet++: An end-to-end approach for feature
  compression in device-edge co-inference systems,'' in \emph{IEEE
  International Conference on Communications Workshops, ICC Workshops}, Dublin,
  Ireland, June 7-11, 2020, pp. 1--6.

\bibitem{DBLP:journals/ton/ChenJLF16}
X.~Chen, L.~Jiao, W.~Li, and X.~Fu, ``Efficient multi-user computation
  offloading for mobile-edge cloud computing,'' \emph{{IEEE/ACM} Trans. Netw.},
  vol.~24, no.~5, pp. 2795--2808, 2016.

\bibitem{DBLP:conf/wcnc/Li0LL18}
J.~Li, H.~Gao, T.~Lv, and Y.~Lu, ``Deep reinforcement learning based
  computation offloading and resource allocation for {MEC},'' in \emph{IEEE
  Wireless Communications and Networking Conference, WCNC}, Barcelona, Spain,
  April 15-18, 2018, pp. 1--6.

\bibitem{DBLP:conf/icc/NathLWF20}
S.~Nath, Y.~Li, J.~Wu, and P.~Fan, ``Multi-user multi-channel computation
  offloading and resource allocation for mobile edge computing,'' in \emph{IEEE
  International Conference on Communications, ICC}, Dublin, Ireland, June 7-11,
  2020, pp. 1--6.

\bibitem{DBLP:journals/corr/SimonyanZ14a}
K.~Simonyan and A.~Zisserman, ``Very deep convolutional networks for
  large-scale image recognition,'' in \emph{International Conference on
  Learning Representations, ICLR}, San Diego, CA, USA, May 7-9, 2015.

\bibitem{DBLP:conf/cvpr/SandlerHZZC18}
M.~Sandler, A.~G. Howard, M.~Zhu, A.~Zhmoginov, and L.~Chen, ``Mobilenetv2:
  Inverted residuals and linear bottlenecks,'' in \emph{IEEE Conference on
  Computer Vision and Pattern Recognition, CVPR}, Salt Lake City, UT, USA, June
  18-22, 2018, pp. 4510--4520.

\bibitem{DBLP:conf/icpads/LiHJWWZ18}
H.~Li, C.~Hu, J.~Jiang, Z.~Wang, Y.~Wen, and W.~Zhu, ``{JALAD:} joint
  accuracy-and latency-aware deep structure decoupling for edge-cloud
  execution,'' in \emph{IEEE International Conference on Parallel and
  Distributed Systems, ICPADS}, Singapore, December 11-13, 2018, pp. 671--678.

\bibitem{rappaport1996wireless}
T.~S. Rappaport \emph{et~al.}, \emph{Wireless communications: principles and
  practice}.\hskip 1em plus 0.5em minus 0.4em\relax prentice hall PTR New
  Jersey, 1996, vol.~2.

\bibitem{DBLP:journals/ejwcn/ChenW20}
Z.~Chen and X.~Wang, ``Decentralized computation offloading for multi-user
  mobile edge computing: a deep reinforcement learning approach,''
  \emph{{EURASIP} J. Wirel. Commun. Netw.}, vol. 2020, no.~1, p. 188, 2020.

\bibitem{bonami2012algorithms}
P.~Bonami, M.~Kilin{\c{c}}, and J.~Linderoth, ``Algorithms and software for
  convex mixed integer nonlinear programs,'' in \emph{Mixed integer nonlinear
  programming}.\hskip 1em plus 0.5em minus 0.4em\relax Springer, 2012, pp.
  1--39.

\bibitem{DBLP:journals/corr/abs-2001-09223}
F.~Jiang, K.~Wang, L.~Dong, C.~Pan, and K.~Yang, ``Stacked auto encoder based
  deep reinforcement learning for online resource scheduling in large-scale
  {MEC} networks,'' \emph{CoRR}, vol. abs/2001.09223, 2020.

\bibitem{DBLP:conf/ijcai/FanS0019}
Z.~Fan, R.~Su, W.~Zhang, and Y.~Yu, ``Hybrid actor-critic reinforcement
  learning in parameterized action space,'' in \emph{International Joint
  Conference on Artificial Intelligence, IJCAI}, Macao, China, August 10-16,
  2019, pp. 2279--2285.

\bibitem{DBLP:conf/icml/SchulmanLAJM15}
J.~Schulman, S.~Levine, P.~Abbeel, M.~I. Jordan, and P.~Moritz, ``Trust region
  policy optimization,'' in \emph{International Conference on Machine Learning,
  ICML}, vol.~37, Lille, France, July 6-11, 2015, pp. 1889--1897.

\bibitem{DBLP:journals/corr/SchulmanMLJA15}
J.~Schulman, P.~Moritz, S.~Levine, M.~I. Jordan, and P.~Abbeel,
  ``High-dimensional continuous control using generalized advantage
  estimation,'' in \emph{International Conference on Learning Representations,
  ICLR}, San Juan, Puerto Rico, May 2-4, 2016.

\bibitem{DBLP:journals/corr/SchulmanWDRK17}
J.~Schulman, F.~Wolski, P.~Dhariwal, A.~Radford, and O.~Klimov, ``Proximal
  policy optimization algorithms,'' \emph{CoRR}, vol. abs/1707.06347, 2017.

\bibitem{caltech101}
F.~Li, R.~Fergus, and P.~Perona, ``One-shot learning of object categories,''
  \emph{{IEEE} Trans. Pattern Anal. Mach. Intell.}, vol.~28, no.~4, pp.
  594--611, 2006.

\bibitem{DBLP:journals/corr/KingmaB14}
D.~P. Kingma and J.~Ba, ``Adam: {A} method for stochastic optimization,'' in
  \emph{International Conference on Learning Representations, ICLR}, San Diego,
  CA, USA, May 7-9, 2015.

\bibitem{DBLP:journals/twc/ZhangWW15}
W.~Zhang, Y.~Wen, and D.~O. Wu, ``Collaborative task execution in mobile cloud
  computing under a stochastic wireless channel,'' \emph{{IEEE} Trans. Wirel.
  Commun.}, vol.~14, no.~1, pp. 81--93, 2015.

\bibitem{DBLP:journals/ton/GuoZJLL18}
F.~Guo, H.~Zhang, H.~Ji, X.~Li, and V.~C.~M. Leung, ``An efficient computation
  offloading management scheme in the densely deployed small cell networks with
  mobile edge computing,'' \emph{{IEEE/ACM} Trans. Netw.}, vol.~26, no.~6, pp.
  2651--2664, 2018.

\bibitem{HUANG201910}
L.~Huang, X.~Feng, C.~Zhang, L.~Qian, and Y.~Wu, ``Deep reinforcement
  learning-based joint task offloading and bandwidth allocation for multi-user
  mobile edge computing,'' \emph{Digital Communications and Networks}, vol.~5,
  no.~1, pp. 10--17, 2019, artificial Intelligence for Future Wireless
  Communications and Networking.

\bibitem{8657791}
J.~Wang, L.~Zhao, J.~Liu, and N.~Kato, ``Smart resource allocation for mobile
  edge computing: A deep reinforcement learning approach,'' \emph{IEEE
  Transactions on Emerging Topics in Computing}, pp. 1--1, 2019.

\bibitem{DBLP:journals/iotj/ZhangZLHMZ19}
K.~Zhang, Y.~Zhu, S.~Leng, Y.~He, S.~Maharjan, and Y.~Zhang, ``Deep learning
  empowered task offloading for mobile edge computing in urban informatics,''
  \emph{{IEEE} Internet Things J.}, vol.~6, no.~5, pp. 7635--7647, 2019.

\bibitem{Chen18IOT}
X.~Chen, H.~Zhang, C.~Wu, S.~Mao, Y.~Ji, and M.~Bennis, ``Optimized computation
  offloading performance in virtual edge computing systems via deep
  reinforcement learning,'' \emph{IEEE Internet of Things Journal}, vol.~6,
  no.~3, pp. 4005--4018, June 2019.

\bibitem{DBLP:journals/iotj/XieTLYH21}
R.~Xie, Q.~Tang, C.~Liang, F.~R. Yu, and T.~Huang, ``Dynamic computation
  offloading in iot fog systems with imperfect channel-state information: {A}
  {POMDP} approach,'' \emph{{IEEE} Internet Things J.}, vol.~8, no.~1, pp.
  345--356, 2021.

\bibitem{DBLP:conf/icassp/HauswaldMZDCM14}
J.~Hauswald, T.~Manville, Q.~Zheng, R.~G. Dreslinski, C.~Chakrabarti, and T.~N.
  Mudge, ``A hybrid approach to offloading mobile image classification,'' in
  \emph{{IEEE} International Conference on Acoustics, Speech and Signal
  Processing, {ICASSP} 2014, Florence, Italy, May 4-9, 2014}.\hskip 1em plus
  0.5em minus 0.4em\relax {IEEE}, 2014, pp. 8375--8379.

\bibitem{DBLP:journals/tii/HuNQZL17}
P.~Hu, H.~Ning, T.~Qiu, Y.~Zhang, and X.~Luo, ``Fog computing based face
  identification and resolution scheme in internet of things,'' \emph{{IEEE}
  Trans. Ind. Informatics}, vol.~13, no.~4, pp. 1910--1920, 2017.

\bibitem{DBLP:journals/iotj/HeG0QQ20}
W.~He, S.~Guo, S.~Guo, X.~Qiu, and F.~Qi, ``Joint {DNN} partition deployment
  and resource allocation for delay-sensitive deep learning inference in iot,''
  \emph{{IEEE} Internet Things J.}, vol.~7, no.~10, pp. 9241--9254, 2020.

\bibitem{DBLP:conf/icann/LiLWDZF18}
G.~Li, L.~Liu, X.~Wang, X.~Dong, P.~Zhao, and X.~Feng, ``Auto-tuning neural
  network quantization framework for collaborative inference between the cloud
  and edge,'' in \emph{International Conference on Artificial Neural Networks,
  ICANN}, vol. 11139, Rhodes, Greece, October 4-7, 2018, pp. 402--411.

\bibitem{DBLP:conf/cvpr/HuangLMW17}
G.~Huang, Z.~Liu, L.~van~der Maaten, and K.~Q. Weinberger, ``Densely connected
  convolutional networks,'' in \emph{IEEE Conference on Computer Vision and
  Pattern Recognition, CVPR}, Honolulu, HI, USA, July 21-26, 2017, pp.
  2261--2269.

\bibitem{DBLP:conf/infocom/HuBWL19}
C.~Hu, W.~Bao, D.~Wang, and F.~Liu, ``Dynamic adaptive {DNN} surgery for
  inference acceleration on the edge,'' in \emph{INFOCOM}, Paris, France, April
  29 - May 2, 2019, pp. 1423--1431.

\bibitem{DBLP:journals/corr/abs-1909-00995}
A.~Yousefpour, S.~Devic, B.~Q. Nguyen, A.~Kreidieh, A.~Liao, A.~M. Bayen, and
  J.~P. Jue, ``Guardians of the deep fog: Failure-resilient {DNN} inference
  from edge to cloud,'' in \emph{International Workshop on Challenges in
  Artificial Intelligence and Machine Learning for Internet of Things}, 2019,
  pp. 25--31.

\bibitem{DBLP:conf/racs/YangLLMW19}
Q.~Yang, X.~Luo, P.~Li, T.~Miyazaki, and X.~Wang, ``Computation offloading for
  fast {CNN} inference in edge computing,'' in \emph{Conference on Research in
  Adaptive and Convergent Systems, RACS}, Chongqing, China, September 24-27,
  2019, pp. 101--106.

\bibitem{DBLP:conf/infocom/ShiHZNZG19}
W.~Shi, Y.~Hou, S.~Zhou, Z.~Niu, Y.~Zhang, and L.~Geng, ``Improving device-edge
  cooperative inference of deep learning via 2-step pruning,'' in \emph{IEEE
  International Conference on Computer Communications Workshops, INFOCOM
  Workshops}, Paris, France, April 29 - May 2, 2019, pp. 1--6.

\bibitem{DBLP:conf/cvpr/Chollet17}
F.~Chollet, ``Xception: Deep learning with depthwise separable convolutions,''
  in \emph{2017 {IEEE} Conference on Computer Vision and Pattern Recognition,
  {CVPR} 2017, Honolulu, HI, USA, July 21-26, 2017}.\hskip 1em plus 0.5em minus
  0.4em\relax {IEEE} Computer Society, 2017, pp. 1800--1807.

\bibitem{DBLP:conf/spawc/JankowskiGM20}
M.~Jankowski, D.~G{\"{u}}nd{\"{u}}z, and K.~Mikolajczyk, ``Joint device-edge
  inference over wireless links with pruning,'' in \emph{International Workshop
  on Signal Processing Advances in Wireless Communications, SPAWC}, Atlanta,
  GA, USA, May 26-29, 2020, pp. 1--5.

\bibitem{DBLP:conf/sensys/Yao0LWLSA20}
S.~Yao, J.~Li, D.~Liu, T.~Wang, S.~Liu, H.~Shao, and T.~F. Abdelzaher, ``Deep
  compressive offloading: speeding up neural network inference by trading edge
  computation for network latency,'' in \emph{ACM Conference on Embedded
  Networked Sensor Systems}, Virtual Event, Japan, November 16-19, 2020, pp.
  476--488.

\end{thebibliography}

\vfill

\end{document}